\documentclass{article}



\usepackage[preprint]{neurips_2025}



\usepackage[utf8]{inputenc} 
\usepackage[T1]{fontenc}    
\usepackage{hyperref}       
\usepackage{url}            
\usepackage{booktabs}       
\usepackage{amsfonts}       
\usepackage{nicefrac}       
\usepackage{microtype}      
\usepackage{xcolor}         

\usepackage{lipsum}
\usepackage{graphicx}
\usepackage{wrapfig}
\usepackage{algorithm}
\usepackage{algpseudocode}
\usepackage{thmtools,thm-restate}
\usepackage{amsthm, amssymb}
\usepackage{colortbl}


\usepackage{amsmath,amsfonts,bm}









\def\eqref#1{equation~\ref{#1}}









\def\1{\bm{1}}






\def\rmI{{\mathbf{I}}}



\def\va{{\bm{a}}}

\def\vc{{\bm{c}}}

\def\vg{{\bm{g}}}

\def\vp{{\bm{p}}}
\def\vq{{\bm{q}}}

\def\vu{{\bm{u}}}
\def\vv{{\bm{v}}}

\def\vx{{\bm{x}}}

\def\vz{{\bm{z}}}



\DeclareMathAlphabet{\mathsfit}{\encodingdefault}{\sfdefault}{m}{sl}
\SetMathAlphabet{\mathsfit}{bold}{\encodingdefault}{\sfdefault}{bx}{n}















\newcommand{\ub}{{\boldsymbol u}}

\newcommand{\pb}{{\boldsymbol p}}

\newcommand{\x}{{\boldsymbol x}}

\newcommand{\m}{{\boldsymbol m}}

\newcommand{\epsilonb}{{\boldsymbol \epsilon}}



\usepackage{capt-of}
\usepackage{diagbox}

\definecolor{C0}{rgb}{0.121569, 0.466667, 0.705882}
\definecolor{C1}{rgb}{1.000000, 0.498039, 0.054902}
\definecolor{C2}{rgb}{0.172549, 0.627451, 0.172549}
\definecolor{C3}{rgb}{0.839216, 0.152941, 0.156863}
\definecolor{C4}{rgb}{0.580392, 0.403922, 0.741176}
\definecolor{C5}{rgb}{0.549020, 0.337255, 0.294118}
\definecolor{C6}{rgb}{0.890196, 0.466667, 0.760784}
\definecolor{C7}{rgb}{0.498039, 0.498039, 0.498039}
\definecolor{C8}{rgb}{0.737255, 0.741176, 0.133333}
\definecolor{C9}{rgb}{0.090196, 0.745098, 0.811765}
\definecolor{trolleygrey}{rgb}{0.5, 0.5, 0.5}


\definecolor{BrickRed}{rgb}{0.6,0,0}
\definecolor{RoyalBlue}{rgb}{0,0,0.8}
\definecolor{Tdgreen}{rgb}{0,0.4,0.7}
\definecolor{pinegreen}{rgb}{0.0, 0.47, 0.44}
\definecolor{cornellred}{rgb}{0.7, 0.11, 0.11}
\definecolor{cadmiumgreen}{rgb}{0.0, 0.42, 0.24}
\definecolor{spirodiscoball}{rgb}{0.06, 0.75, 0.99}
\definecolor{mylightblue}{rgb}{0.85, 0.90, 0.94}
\definecolor{maroon}{cmyk}{0,0.87,0.68,0.32}

\usepackage{pifont}
%
%


\definecolor{cfg}{rgb}{0.906, 0.435, 0.318}
\definecolor{cfgpp}{rgb}{0.165, 0.616, 0.561}
\definecolor{cfgnull}{rgb}{0.208, 0.565, 0.953}



\newtheorem{lemma}{Lemma}
\def\eqref#1{Eq.~(\ref{#1})}

\usepackage{animate}
\usepackage{tabularx}

\usepackage{color}
\usepackage{colortbl}
\definecolor{tabfirst}{rgb}{1, 0.7, 0.7} 
\definecolor{tabsecond}{rgb}{1, 0.85, 0.7} 
\definecolor{tabthird}{rgb}{1, 1, 0.7} 

\title{
FlowAlign: Trajectory-Regularized,  Inversion-Free Flow-based Image Editing
}

\author{%
  Jeongsol Kim*, Yeobin Hong*, Jonghyun Park, Jong Chul Ye\\
  KAIST\\
  \texttt{\{jeongsol, yeobin34, jhpark99, jong.ye\}@kaist.ac.kr}\\
 \text{* Equal contribution}\\
}

\begin{document}
\maketitle

\begin{figure}[!h]
    \centering
    \includegraphics[width=\linewidth]{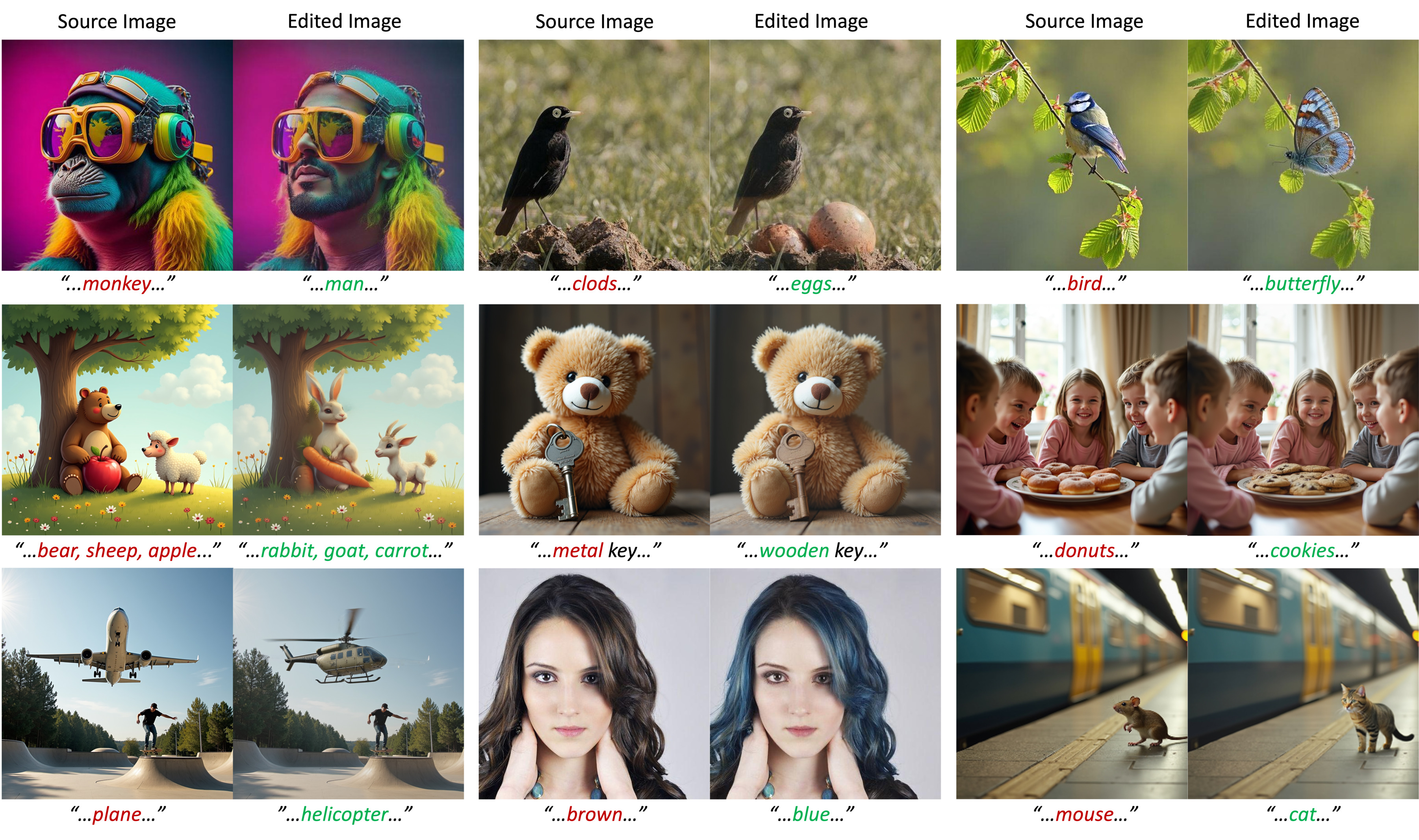}
    \vspace{-0.7cm}
    \caption{Representative editing results produced by FlowAlign, where the red portion of the prompt is replaced with the green portion. Samples are drawn from EditBench and PIEBench. }
    \label{fig:represent}
\end{figure}

\begin{abstract}
Recent inversion-free, flow-based image editing methods such as FlowEdit leverages a pre-trained noise-to-image flow model such as Stable Diffusion 3, 
enabling text-driven manipulation by solving an ordinary differential equation (ODE). While the lack of exact latent inversion is a core advantage of these methods, it often results in unstable editing trajectories and poor source consistency.
To address this limitation, we propose {\em FlowAlign}, a novel inversion-free flow-based framework for consistent image editing with optimal
control-based trajectory control. Specifically, FlowAlign introduces 
source similarity at the terminal point as a regularization term to promote smoother and more consistent trajectories during the editing process.
Notably, our terminal point regularization is shown to  explicitly balance semantic alignment with the edit prompt and structural consistency with the source image along the trajectory. Furthermore, FlowAlign naturally supports reverse editing by simply reversing the ODE trajectory, highliting the reversible and consistent nature of the transformation. Extensive experiments demonstrate that FlowAlign outperforms existing methods in both source preservation and editing controllability.
\end{abstract}

\section{Introduction}
In text-based image editing \cite{meng2021sdedit, mokady2023null, tumanyan2023plug, hertz2023delta, kim2024dreamsampler, kulikov2024flowedit, brooks2023instructpix2pix}, the goal is to transform a source image into a target  based on either textual descriptions of the images or specific editing instructions.
From a distributional perspective, the image editing task can be interpreted as a continuous normalizing flow (CNF)  \cite{papamakarios2021normalizing} that pushes forward a source distribution to a target distribution.
Specifically, we parameterize a velocity field, that uniquely determines the flow, using a neural network. Then, the generative process corresponds to solving an ordinary differential equation (ODE) governed by the trained velocity field.
To reduce the computational cost of simulating ODEs during training for likelihood evaluation, flow matching has been proposed \cite{lipman2023flow}. Its conditional variant enables direct supervision by computing the target velocity, allowing efficient training of flow models between arbitrary distributions.
These flow models includes score-based diffusion models \cite{song2020score,ho2020denoising, song2020denoising} as well as rectified flow models \cite{liu2023flow, esser2024scaling}.

Recently, within the framework of flow-based models, text-to-image generation has achieved significant advancements through improved time discretization, loss weighting, and model architecture, notably based on DiT \cite{peebles2023scalable}. While these foundational models are trained to map samples from a normal distribution to a clean data distribution, additional training—either from scratch or via fine-tuning—is necessary to establish a flow between two arbitrary distributions.

To mitigate this additional cost for constructing a new flow between image distributions, several approaches such as SDEdit~\cite{meng2021sdedit} and Dual Diffusion Implicit Bridge (DDIB)~\cite{su2022dual} have leveraged pre-trained noise-to-image diffusion models.
However, SDEdit requires careful selection of an appropriate initial noise level for editing, while DDIB relies on an inversion process that is prone to errors arising from discretization and an approximated velocity field for subsequent timesteps. 

Recently, RF-inversion~\cite{rout2025semantic} proposes an optimal-control based inversion for flow models, in which a guiding vector field steers the dynamics toward high-likelihood samples via velocity interpolation.
Unfortunately, it still involves computational overhead due to the ODE inversion. To address this, FlowEdit~\cite{kulikov2024flowedit}  proposed simulating an ODE between two image samples without inversion. However, 
empirical results showed that the method is quite sensitive to the hyperparameters and often fails to retain the source consistency.
Moreover,
the method heuristically applies classifier-free-guidance (CFG) to the both source and target velocity fields with different scaling factor and relies on skipping early timesteps of the ODE to enhance editing quality. These design choices introduce multiple hyperparameters to be searched and compromise the deterministic nature of ODE (see Section~\ref{subsec:recon}).

Motivated by the observation that these limitations of FlowEdit arise from nonsmooth and unstable editing trajectories—partly due to the lack of explicit latent inversion—we introduce {\em FlowAlign}, an optimal control-based  inversion-free approach for consistent and controllable text-driven image editing through trajectory regularization.
In contrast to  RF-inversion~\cite{rout2025semantic} that requires an ODE inversion,
we introduces a structural similarity at the terminal point as our trajectory regularization term to overcome the instability caused by the absence of the inverted latent.
%
Although the regularization is primarily enforced at the terminal point,
we further observe that it also naturally enforces source consistency along the trajectory by penalizing unnecessary deviations from the original image. 

Our method is also computationally efficient, requiring only one additional function evaluation (NFE) per ODE step for the regularization term. In contrast, methods like FlowEdit~\cite{kulikov2024flowedit} incur twice the NFE due to the reliance on classifier-free guidance. Despite this efficiency, FlowAlign achieves superior source preservation and competitive or improved editing quality compared to existing approaches
in image, video, and 3D editing. Finally, FlowAlign supports editing through backward ODE with the learned flow field, enabling accurate reconstruction of the original image from the edited output. This highlights the reversible and deterministic nature of the learned transformation, made possible by our explicit trajectory regularization.

\section{Backgrounds}
Suppose  we have access to samples from two distributions $X_1\sim p$ and $X_0\sim q$ that forms independent coupling, $\pi_{0,1}(X_0, X_1)=p(X_1)q(X_0)$. 
We can define a time-dependent function called flow $\psi_t(\vx) : [0,1]\times \mathbb{R}^d \rightarrow \mathbb{R}^d$, which is diffeomorphism and satisfies $\psi_t(X_1)=X_t$, to describe a continuous transform between $X_1$ and $X_0$.
Specifically, for $0\leq s < t \leq 1$, we can sequentially transfer the sample $X_t$ to $X_s = \psi_s(X_1)=\psi_s(\psi_t^{-1}(X_t))=\psi_{s|t}(X_t)$.
%
Here, the $\psi_t$ is uniquely characterized by a flow ODE
\begin{align}
    d\psi_t(\vx) = \vv_t(\psi_t(\vx))dt
    \label{eqn:flowode}
\end{align}
where $\vv_t$ represents velocity field.
Here, $\vv_t$ is approximated by a neural network $\vv_t^\theta$ with the parameter $\theta$ to construct a flow as generative model.
The training objective for the parameterized velocity field is called flow matching, which is expressed as
\begin{align}\label{eq:fm}
    \mathcal{L}_{FM} = \mathbb{E}_{t\in [0,1], \vx_t \sim p_t} \| \vv_t(\vx_t) - \vv_t^\theta(\vx_t) \|^2.
\end{align}
Unfortunately,  we cannot access $\vv_t(\vx_t)$ due to intractable integration over all $\vx_0$.
To address this, \cite{lipman2023flow} proposes conditional flow matching: %
%
%
\begin{align}\label{eq:cfm}
    \mathcal{L}_{CFM} = \mathbb{E}_{t\in [0,1], \vx_0 \sim q} \| \vv_t(\vx_t|\vx_0) - \vv_t^\theta(\vx_t) \|^2
\end{align}
where $\vv_t(\vx_t|\vx_0)$ is a conditional velocity field given by
\begin{align}\label{eq:condv}
    \vv_t(\vx_t|\vx_0) = 
    \dot\psi_t(\psi_t^{-1}(\vx_t|\vx_0)|\vx_0) =  \dot\psi_t(\vx_1 |\vx_0) ,
\end{align}
where the last equality is due to the definition of $\vx_t=\psi_t(\vx_1|\vx_0)$.
One of the most important contribution of \cite{lipman2023flow} is that the trained velocity field $\vv_t^\theta(\vx_t)$ from the conditional flow matching
\eqref{eq:cfm} can also approximate the {\em unconditional} velocity field $\vv_t(\vx_t)$ in \eqref{eq:fm}. Accordingly,
after training of the velocity field using \eqref{eq:cfm}, we can generate a sample by solving the following ODE:
\begin{align}\label{eq:flowode}
    d\vx = \vv_t^\theta(\vx_t) dt.
\end{align}
%
Among various flows, the affine conditional flow defined as $\psi_t(\vx_1|\vx_0) = a_t \vx_0 + b_t \vx_1$ is widely used, where $a_0=b_1=1$ and $a_1=b_0=0$. The resulting  conditional velocity field derived from the rectified flow for \eqref{eq:cfm} is then given by
\begin{align}
    \vv_t(\vx_t|\vx_0) = \dot \psi_t(\psi_t^{-1}(\vx_t|\vx_0)|\vx_0) = \dot a_t \vx_0 + \dot b_t \vx_1.
\end{align}
In case of linear conditional flow (or rectified flow) \cite{liu2023flow}, $a_t=t$ and $b_t=1-t$, and the conditional velocity field derived from the rectified flow for \eqref{eq:cfm} is $\vv_t(\vx_t|\vx_0)= \vx_1 - \vx_0$.

Without loss of generality, the problem formulation could be extended to a flow defined in latent space. Accordingly, we will use $\vx_t$ to denote both images and latent codes.

\begin{figure}[t]
    \centering
    \includegraphics[width=\linewidth]{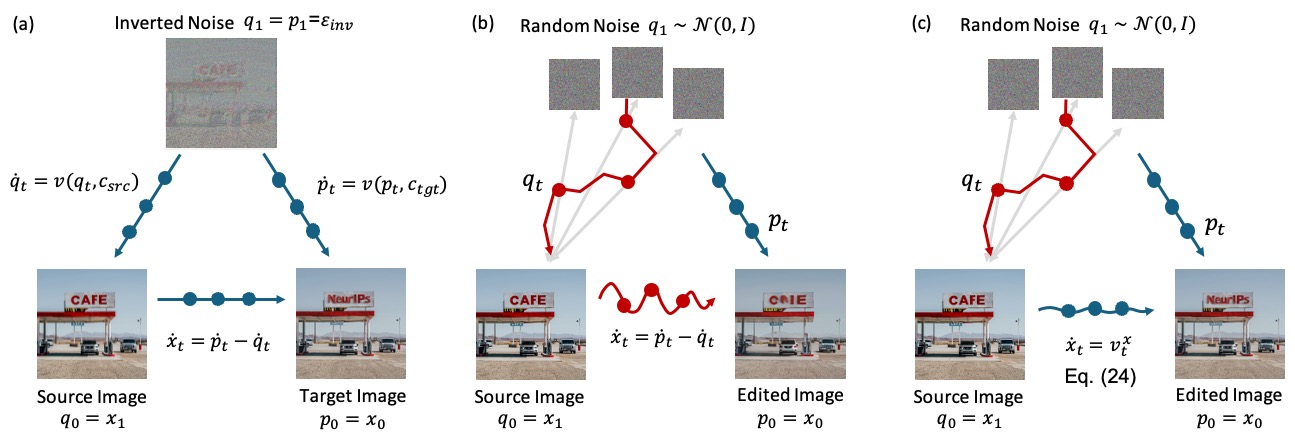}
    \vspace{-0.5cm}
    \caption{\textbf{Overview}. (a) Starting from the inverted latent,
     the ODE from source to target images can be obtained. 
     (b) In contrast, existing inversion-free approaches suffer from nonsmooth trajectories, as $\vq_t$ is sampled with random noise at each step, often resulting in editing artifacts.
(c) FlowAlign uses the regularized velocity $\vv_t^{\vx}$ from the similarity regularization at the terminal point, producing smoother and more consistent trajectories between the source and target images.
     Prompt:  {\em "…white and red sign that reads CAFE" $\rightarrow$ "…white and red sign that reads NeurIPS".}}
    \label{fig:overview}
\end{figure}

\section{FlowAlign}
\subsection{Text-based Image Editing using Pre-trained Flow Models}
In  text-based image editing using flow models, we aim to translate a source image $\vx_{src}$ at $t=1$ to a target image $\vx_{tgt}$ at $t=0$ based on text description of each image or editing instruction.
In particular, such translation can be represented through a linear conditional flow between two image distributions,
\begin{align}
    \psi_t(\vx_{src}|\vx_{tgt}) :=\vx_t = (1-t) \vx_{tgt} + t \vx_{src} \quad .
    \label{eqn:target_flow}
\end{align}
%
Note that the associated flow ODE described by \eqref{eq:flowode}
requires  training between two image distributions using the conditional flow matching
of \eqref{eq:cfm} with \eqref{eqn:target_flow}.
To bypass the additional training,
 consider two generative flows 
  that transfers the same noise $\epsilonb\sim\mathcal{N}(0, \rmI_d)$ to each image,
\begin{align}
    \label{eqn:forward_q}
     \psi_t^{src}(\epsilonb|\vx_{src}):=\vq_t &= (1-t) \vx_{src} + t \epsilonb, \\
     \label{eqn:forward_p}
     \psi_t^{tgt}(\epsilonb|\vx_{tgt}):=\vp_t &= (1-t) \vx_{tgt} + t \epsilonb,
\end{align}
and the associated ODEs:
\begin{align}\label{eq:aODE}
d\vq_t = \vv_t^{src}(\vq_t) dt ,\quad d\vq_t = \vv_t^{tgt}(\vp_t) dt 
\end{align}
with $\vq_1=\vp_1=\epsilonb$, $\vq_0 = \vx_{src}$, and $\vp_0=\vx_{tgt}$.
Since both flows are accessible without additional training by using a pre-trained noise-to-image  flow model\footnote{In this paper, we use Stable Diffusion 3.0 (medium), a foundational pre-trained flow model for the main experiments.},
it is beneficial if we can leverage \eqref{eqn:forward_q} and \eqref{eqn:forward_p} to simulate the  ODE for the flow in \eqref{eqn:target_flow}.

Importantly, the system of equations (\eqref{eqn:target_flow}, \eqref{eqn:forward_q} and \eqref{eqn:forward_p}) lead to the following key equality:
\begin{align}
    \vx_t = \vp_t -\vq_t + \vx_{src}, \quad  \mbox{where}\quad \vx_1=\vx_{src}, \vx_0=\vx_{tgt}
    \label{eqn:constraint}
\end{align}
Consequently, we can simulate the  ODE for image editing by
\begin{align}\label{eqn:target_ode}
    d\vx_t = d\vp_t - d\vq_t = [\vv_t^{tgt}(\vp_t) -\vv_t^{src}(\vq_t)]dt, \quad \mbox{where}\quad \vp_t:= \vq_t+\vx_t-\vx_{src}
\end{align}
without additional training of flow models (see Figure~\ref{fig:overview}(a)). %
In case of text-conditional pre-trained flow models, we can leverage the same neural network $\vv_t^\theta$ with text embeddings $c_{src}$ and $c_{tgt}$   to approximate the $\vv_t^{tgt}(\vp_t), \vv_t^{src}(\vq_t)$, respectively, leading to  
\begin{align}
    d\vx_t = [\vv_t^\theta(\vp_t, c_{tgt}) - \vv_t^\theta(\vq_t, c_{src})]dt.
    \label{eqn:simulation}
\end{align}

\subsection{Trajectory Errors from  Inversion-Free Approaches}

In \eqref{eqn:simulation}, $\vp_t$ and $\vq_t$ are first computed at each discrete time step $t$. If an inverted latent $\epsilonb_{\text{inv}}$ is obtained via ODE inversion from the source image $\vx_{\text{src}}$ along the trajectory of $\vq_t$ (as illustrated in Figure~\ref{fig:overview}(a)), then $\vp_t$ and $\vq_t$ can be further updated as:
%
\begin{align}\label{eq:pq}
 d\vq_t = \vv_t^\theta(\vq_t,c_{src}) dt,\quad d\vp_t = \vv_t^\theta(\vp_t,c_{tgt}) dt
\end{align}
with  the initialization condition $\vp_1=\vq_1=\epsilonb_{inv}$.
However, ODE inversion increase the overall computational cost.
Thus,  we are interested in 
avoiding the ODE inversion process. 
Toward this aim, FlowEdit~\cite{kulikov2024flowedit} proposed the following update for $\vp_t$ and $\vq_t$
\begin{align}\label{eq:flowedit}
 \vq_t = (1-t)\vx_{src} + t \epsilonb, \quad d\vp_t = \vv_t^\theta(\vq_t+\vx_t-\vx_{src},c_{tgt}) dt
\end{align}
with the initialization condition $\vp_1=\vq_1=\epsilonb$ for randomly sampled $\epsilonb$ at each $t$,
where the second equality stems from  \eqref{eqn:constraint}, i.e. $\vp_t=\vq_t+\vx_t-\vx_{src}$.

Unfortunately,  one of the critical limitations of FlowEdit using \eqref{eq:flowedit} is that
$\vq_t$ trajectory is not sufficiently smooth due to the random sampling of $\epsilonb$ (Figure~\ref{fig:overview}(b)).
 This leads to the inaccuracy of using simulated ODE in \eqref{eqn:simulation} for $\vx_t$.
To mitigate this, we propose an explicit trajectory regularization method using the structural similarity at the terminal point.

\subsection{Trajectory Regularization using Similarity at Terminal Point}

Given that the trajectory of $\x_t$ in \eqref{eqn:simulation} is not necessarily smooth due to the random sampling of $\epsilonb$, our objective is to simulate an ODE that explicitly penalizes deviations from a smooth trajectory as a form of regularization. Inspired by the recent advances of the optimal control approaches for flow models \cite{rout2025semantic, rout2025rbmodulation}, we consider the following time-reversal optimal control problem
\begin{align}
\dot\vx_t &= \vu(\vx_t),\quad \vx_1 =\vx_{src} \label{eq:ocx} \\
 V(\vu_t) &=  \int_{0}^1  \ell (\vx_t,\vu_t,t) dt + m (\vx_0) \label{eq:ocv}
\end{align}
In this paper, we use the following loss:
\begin{align}
\ell (\vx_t,\vu_t,t) & := \frac{1}{2}\|\vu_t-\left(v_t^\theta(\vp_t, c_{tgt}) - \vv_t^\theta(\vq_t, c_{src})\right)\|^2 , 
 \label{eq:ell}
\end{align}
thereby enforcing the similarity of the optimal control $\vu_t$ to the
 original velocity field in \eqref{eqn:simulation}.
 Unfortunately,  this is shown not sufficient  for regularizing the trajectory deviation owing to the lack of inverted latent.
 Therefore, our goal is to utilize the terminal loss $m(\vx_0)$ to enforce the trajectory smoothness.
 
One of the most important contributions of this work is showing that a similarity regularization at the terminal point serves the goal.
 Specifically, we introduce a
 $l_2$-based terminal point regularization:
 \begin{align}
 \m(\vx_0)=\frac{\eta}{2}\|\vx_0-\vx_{{src}}\|^2 \quad \label{eq:m}
 \end{align}
 This implies that the ODE evolution from 
from $t=1$ to $t=0$ ultimately converges to a terminal solution that closely resembles the starting point,
 i.e. $\vx_{tgt}\simeq \vx_{src}$.
 While this might raise concerns about convergence to a trivial solution where $\vx_t=\vx_{src}$ for all $t$,
the crucial distinction lies in our use of a finite 
$\eta$, as opposed to taking the limit $\eta\rightarrow \infty$ 
 as done in RF-inversion~\cite{rout2025semantic}.
%
 In what follows, we show that this terminal point regularization lead to the balance between the semantic guidance
 and structural preservation.
\begin{restatable}[]{prop}{rectflow}
\label{prop:rectflow}
    For the linear conditional flow with $a_t = 1-t$ and $b_t=t$ where $0\leq t \leq 1$, the ODE  that solves
    the optimal control problem with \eqref{eq:ocv}, \eqref{eq:ell} and \eqref{eq:m} is given by
\begin{align}
    d \vx_t &= \vv_t^\vx(\vp_t,\vq_t,\vp_0, \vq_0) dt
    \end{align}
    with the initial condition $\vx_1=\vx_{src}$, where
    \begin{align}
    \vv_t^\vx(\vp_t,\vq_t,\vp_0, \vq_0) &:\simeq \vv_t(\vp_t, c_{tgt}) - \vv_t(\vq_t, c_{src}) + \gamma
    \left(\mathbb{E}[\vp_0|\vp_t]-\mathbb{E}[\vq_0|\vq_t]\right) 
\end{align}
where $ \gamma = \frac{-\eta}{1-\eta t}$ is positive for sufficiently large $\eta$ and $\mathbb{E}[\vq_0|\vq_t] = \vq_t - t \vv_t(\vq_t, c_{src}),  \mathbb{E}[\vp_0|\vp_t] = \vp_t - t \vv_t(\vp_t, c_{tgt})$ are Tweedie's denoising estimates.
\end{restatable}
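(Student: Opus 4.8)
The plan is to attack this as a standard finite-horizon deterministic optimal control problem via Pontryagin's Maximum Principle (equivalently, the Hamilton--Jacobi--Bellman / costate equations). First I would set up the Hamiltonian associated with the running cost \eqref{eq:ell} and the dynamics \eqref{eq:ocx}. Writing the base velocity as $\vb_t := \vv_t^\theta(\vp_t,c_{tgt}) - \vv_t^\theta(\vq_t,c_{src})$, the running cost is $\ell = \tfrac12\|\vu_t - \vb_t\|^2$, so the Hamiltonian is $H(\vx_t,\vu_t,\vlambda_t,t) = \tfrac12\|\vu_t - \vb_t\|^2 + \vlambda_t^\top \vu_t$. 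Stationarity in $\vu_t$ gives the optimal control $\vu_t^\star = \vb_t - \vlambda_t$, reducing everything to solving for the costate $\vlambda_t$. The costate ODE is $\dot\vlambda_t = -\partial_{\vx} H$; because $\ell$ and the dynamics depend on $\vx_t$ only through $\vp_t = \vq_t + \vx_t - \vx_{src}$ (with $\vq_t$ treated as an exogenous, $\vx$-independent trajectory under the inversion-free ansatz), the gradient $\partial_{\vx}H$ picks up the Jacobian of $\vb_t$ in $\vp_t$. The terminal (transversality) condition is $\vlambda_0 = \nabla m(\vx_0) = \eta(\vx_0 - \vx_{src})$.

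The key simplification — and the place where the linear-flow structure is essential — comes from evaluating $\vx_0$ in closed form. Because the flow is linear ($a_t = 1-t$, $b_t = t$), integrating the dynamics backward and using the constraint \eqref{eqn:constraint} $\vx_t = \vp_t - \vq_t + \vx_{src}$ together with the forward parametrizations \eqref{eqn:forward_q}--\eqref{eqn:forward_p}, one can express $\vx_0 - \vx_{src} = \vp_0 - \vq_0$ and, more importantly, relate the terminal displacement back to the current time $t$. The plan is to show that the costate stays proportional to a difference of ``clean-data'' residuals: I expect $\vlambda_t$ to be expressible, up to the first-order/linearization implicit in the ``$:\simeq$'' in the statement, in terms of $\mathbb{E}[\vp_0|\vp_t] - \mathbb{E}[\vq_0|\vq_t]$. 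Here I would invoke the Tweedie identities already quoted in the proposition, $\mathbb{E}[\vq_0|\vq_t] = \vq_t - t\,\vv_t(\vq_t,c_{src})$ and $\mathbb{E}[\vp_0|\vp_t] = \vp_t - t\,\vv_t(\vp_t,c_{tgt})$, which hold precisely because for the rectified/linear flow the conditional velocity is $\vx_1 - \vx_0$, so the posterior mean of the data endpoint is an affine readout of $(\vx_t, \vv_t)$.

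The scalar gain $\gamma = \tfrac{-\eta}{1 - \eta t}$ should fall out of solving a scalar linear ODE for the proportionality coefficient between $\vlambda_t$ and the residual difference. Concretely, I would posit $\vlambda_t = \beta_t\big(\mathbb{E}[\vp_0|\vp_t] - \mathbb{E}[\vq_0|\vq_t]\big)$ with an unknown scalar $\beta_t$, substitute into the costate ODE, use the linearity of the flow to differentiate the Tweedie expressions along the trajectory (this is where the factor $t$ in $\mathbb{E}[\cdot|\cdot]$ produces the $\eta t$ term in the denominator), and match coefficients to get a Riccati-type scalar ODE for $\beta_t$ whose solution with terminal data $\beta_0 = \eta$ is $\beta_t = \eta/(1-\eta t)$ — hence $\vu_t^\star = \vb_t - \vlambda_t = \vb_t + \gamma(\mathbb{E}[\vp_0|\vp_t] - \mathbb{E}[\vq_0|\vq_t])$ with $\gamma = -\eta/(1-\eta t)$, matching the claim. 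Finally I would note $\gamma > 0$ exactly when $\eta t > 1$, i.e.\ for sufficiently large $\eta$, and interpret $\vv_t^\theta \approx \vv_t$ to justify replacing network outputs by the true velocity in the displayed formula.

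The main obstacle I anticipate is the honest treatment of the $\vx$-dependence of $\vb_t$ through $\vp_t$: the costate equation nominally involves the Jacobian $\partial_{\vp}\vv_t^\theta(\vp_t,c_{tgt})$, which is not available in closed form, and this is almost certainly what the ``$:\simeq$'' is absorbing — either a first-order linearization that drops this Jacobian term, or an approximation treating the score/velocity as locally affine (consistent with the Gaussian-posterior Tweedie picture). I would make that approximation explicit up front, state it as the governing assumption, and then the remaining computation — Hamiltonian minimization, the scalar Riccati ODE for $\beta_t$, and back-substitution via Tweedie — is routine. A secondary subtlety is sign/direction bookkeeping: the control runs from $t=1$ to $t=0$ (a time-reversed problem), so I would be careful that the transversality condition is imposed at $t=0$ and that $\dot\vx_t = \vu(\vx_t)$ is integrated in the correct direction, since a sign error there would flip $\gamma$.
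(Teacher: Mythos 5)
Your plan is correct and lives in the same Pontryagin/Hamiltonian framework as the paper's proof (Lemma~\ref{lem:oc}): identical Hamiltonian, identical minimizer $\vu_t^*=\va_t-\pb_t$ with $\va_t:=\vv_t^\theta(\vp_t,c_{tgt})-\vv_t^\theta(\vq_t,c_{src})$, and the transversality condition $\pb_0=\eta(\vx_0-\vx_{src})$ imposed at the terminal time $t=0$ of the reversed dynamics. Where you differ is in how the gain $\eta/(1-\eta t)$ is extracted. The paper treats $\va_t$ as exogenous, so the Hamiltonian has no $\vx$-dependence and the costate is a \emph{constant} $\pb$; integrating the state equation from $t$ to $0$ and substituting into the transversality condition gives a static linear equation with solution $\pb=\eta\left(\vx_t+\int_t^0\va_s\,ds-\vx_{src}\right)/(1-\eta t)$, and the Tweedie form then follows from the first-order replacement $\int_t^0\va_s\,ds\simeq -t\,\va_t$ combined with the constraint $\vx_t-\vx_{src}=\vp_t-\vq_t$ of \eqref{eqn:constraint}. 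You instead posit $\pb_t=\beta_t\,\vr_t$ with $\vr_t:=\mathbb{E}[\vp_0|\vp_t]-\mathbb{E}[\vq_0|\vq_t]$ and solve the scalar Riccati equation $\dot\beta_t=\beta_t^2$, $\beta_0=\eta$, giving $\beta_t=\eta/(1-\eta t)$ and hence the same $\gamma$; the two computations are consistent, since $\vr_t=\vx_t-\vx_{src}-t\va_t$ gives $\dot\vr_t=\vu_t^*-\va_t-t\dot\va_t\simeq-\pb_t$ and therefore $\frac{d}{dt}(\beta_t\vr_t)=0$, i.e.\ your time-varying factorization reproduces the paper's constant costate. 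What your route buys is that the Tweedie combination is built in from the start rather than emerging from the integral approximation; what it costs is the closure check $\dot\vr_t\propto\vr_t$, which requires not only dropping the velocity Jacobians (the approximation you correctly flag as what ``$\simeq$'' absorbs) but also neglecting $t\,\dot\va_t$ — exactly the local-constancy of $\va_t$ the paper invokes when approximating $\int_t^0\va_s\,ds$ by $-t\,\va_t$ — so state that second approximation explicitly when you write it up. Your sign bookkeeping ($\gamma>0$ iff $\eta t>1$) and the placement of the terminal condition match the paper.
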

Note that the resulting velocity field can be decomposed as
\begin{align}
  \vv_t^\vx(\vp_t,\vq_t,\vp_0, \vq_0)  := \underbrace{[\vv_t(\vp_t, c_{tgt}) - \vv_t(\vq_t, c_{src})]}_{\text{Semantic Guidance}} + \gamma 
  \underbrace{(\mathbb{E}[\vp_0|\vp_t] - \mathbb{E}[\vq_0|\vq_t])}_{\text{Source Consistency}}.
   \label{eqn:edit_reg}
\end{align}

The final velocity in \eqref{eqn:edit_reg} needs more discussion. Similar to the drift term in \eqref{eqn:simulation},
the first term in \eqref{eqn:edit_reg}
serves as the directional signal from the source to the target semantics.
On the other hand, the second term
 becomes  a source consistent regularization gradient based on the distance between the clean estimates of $\vp_t$ and $\vq_t$, computed using Tweedie formula.
This regularization gradient keeps the trajectory $\vp_t$ close to the source-consistency direction represented by $\mathbb{E}[\vq_0|\vq_t]$, thereby implicitly regulating $\vx_t$.
%

Another byproduct of trajectory regularization is the robustness to the classifier-free guidance (CFG).
Unlike recent work~\cite{kulikov2024flowedit} that uses CFG to both estimated noises in with null-text embedding and different scales, 
we use the CFG as for the ODE trajectory $\vp_t$ only, 
\begin{align}
\vv^\theta(\vp_t,c_{src},c_{tgt}) = \vv^\theta(\vp_t, c_{src}) + \omega\left[\vv^\theta(\vp_t, c_{tgt}) - \vv^\theta(\vp_t, c_{src})\right]
\label{eqn:our_cfg}
\end{align}
where $\omega \geq 0$ is the CFG scale factor.
%
We find that commonly used CFG values (such as 7.5) are effective. For further analysis of the effect of $\omega$, see Section \ref{sec:abl}.
Accordingly, the proposed method is presented as time-reversal regularized trajectory:
\begin{align}
    d\vx_t = [\vv_t(\vp_t, c_{tgt}, c_{src}) - \vv_t(\vq_t, c_{src})]dt - \gamma dt (\mathbb{E}[\vq_0|\vq_t]-\mathbb{E}[\vp_0|\vp_t]),\quad \vx_1=\vx_{src}
    \label{eqn:final}
\end{align}
The complete algorithm is presented in Algorithm~\ref{alg:method}. Our approach introduces two hyperparameters $\omega$ and $\zeta=-\gamma dt >0$. We find that using constant values yields stable results and provide detailed analysis of their effects in Section \ref{sec:abl}.

\begin{algorithm}[!t]
\caption{Algorithm of FlowAlign on Latent Space}\label{alg:method}
\begin{algorithmic}[1]
\Require Source image $\vx_{src}$, Pre-trained flow model $\vv^\theta$, VAE encoder and Decoder $\mathcal{E}, \mathcal{D}$, 
Source/Target text embeddings $c_{src}, c_{tgt}$, CFG scale $\omega$, source consistency scale $\zeta$
\State $\vx_{src} \gets \mathcal{E}(\vz_{src})$
\State $\vx_t \gets \vx_{src}$
\For{$t: 1\rightarrow 0$}
    \State $\epsilon \sim \mathcal{N}(0, \rmI)$
    \State $\vq_t \gets (1-t) \vx_{src} + t \epsilonb$
    \State $\vp_t \gets \vx_t - \vx_{src} + \vq_t$
    \State $\vv^\theta(\vp_t) := \vv^\theta(\vp_t, c_{src}) + \omega\left[\vv^\theta(\vp_t, c_{tgt}) - \vv^\theta(\vp_t, c_{src})\right]$
    \State $\vv^\theta(\vq_t) := \vv^\theta(\vq_t, c_{src})$
    \State $\mathbb{E}[\vp_0|\vp_t] \gets \vp_t - t \vv^\theta(\vp_t), \quad \mathbb{E}[\vq_0|\vq_t] \gets \vq_t - t \vv^\theta(\vq_t)$
    \State $\vx_t \gets \vx_t + \left[\vv^\theta(\vp_t) - \vv^\theta(\vq_t)\right] dt + \zeta (\mathbb{E}[\vq_0|\vq_t]-\mathbb{E}[\vp_0|\vp_t])$
\EndFor
\State $\vz_{edit} \gets \mathcal{D}(\vx_t)$
\end{algorithmic}
\end{algorithm}


\begin{figure}[!t]
    \centering
    \includegraphics[width=0.9\linewidth]{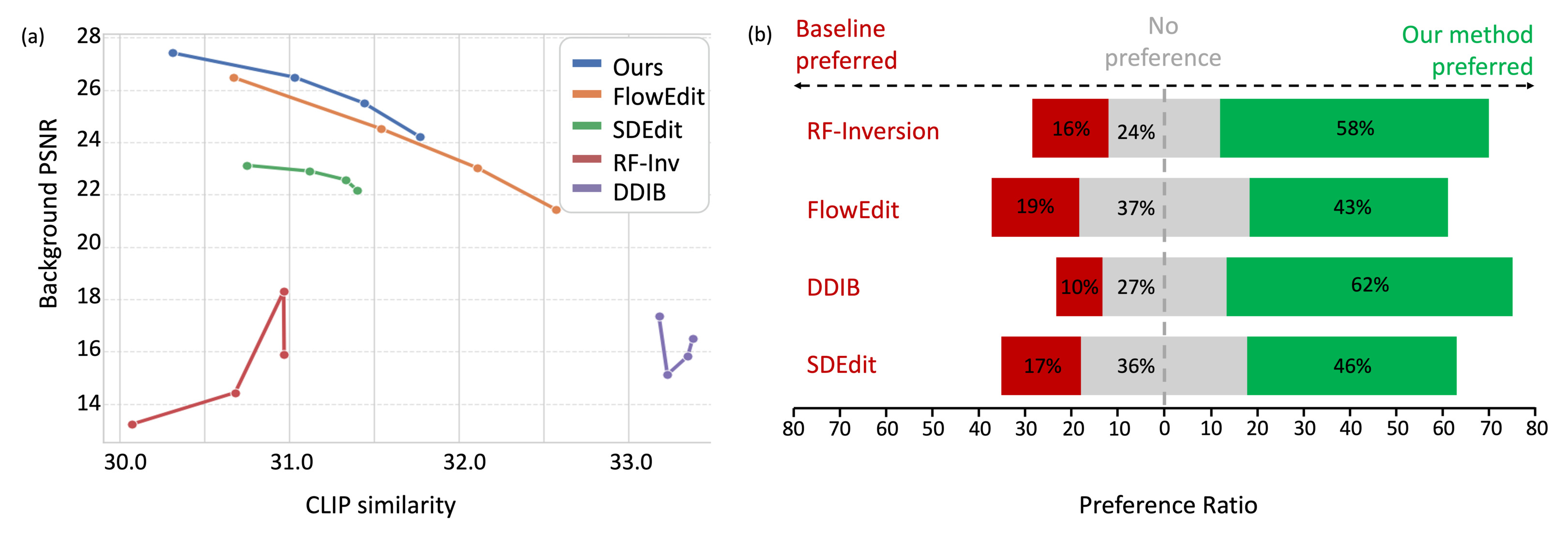}
    \vspace{-0.3cm}
    \caption{\textbf{Quantitative and human preference evaluation}. (a) Trade-off between CLIP similarity versus background PSNR. 
    (b) User preference results. Each bar shows the proportion of responses favoring the baseline (red), showing no preference (gray), or favoring our method (green). Bars are centered at 0 to emphasize directional preference. Across all comparisons, our method is preferred. 
    }
    \label{fig:metric}
\end{figure}

\section{Experimental Results}
\noindent\textbf{Dataset, Baseline and Compute Resource.}
To verify the editing performance, we perform multiple analysis using PIE-Bench~\cite{DBLP:journals/corr/abs-2310-01506} that contains 700 synthetic and natural images with paired original and editing prompts.
For baseline algorithms, we focus on comparing with methods that establish trajectory between two samples. For the methods that utilizes noisy sample distribution to connect two trajectories, we use SDEdit~\cite{meng2021sdedit} and DDIB~\cite{su2022dual}. For the method that improves the inversion process, we use RF-inversion~\cite{rout2025semantic}. For the inversion-free method, we select FlowEdit~\cite{kulikov2024flowedit}. Note that all these methods are training-free text-based image editing algorithms. For a fair comparison, we use the same flow model and 33 NFEs by following FlowEdit. In case of methods that involves inversion process, we use 17 NFEs for each inversion and sampling process. 
We conduct experiments using NVIDIA GeForce RTX 4090 (24GB VRAM). For runtime analysis, see Appendix~\ref{sec:runtime}. 

\begin{figure}[!t]
    \centering
    \includegraphics[width=\linewidth]{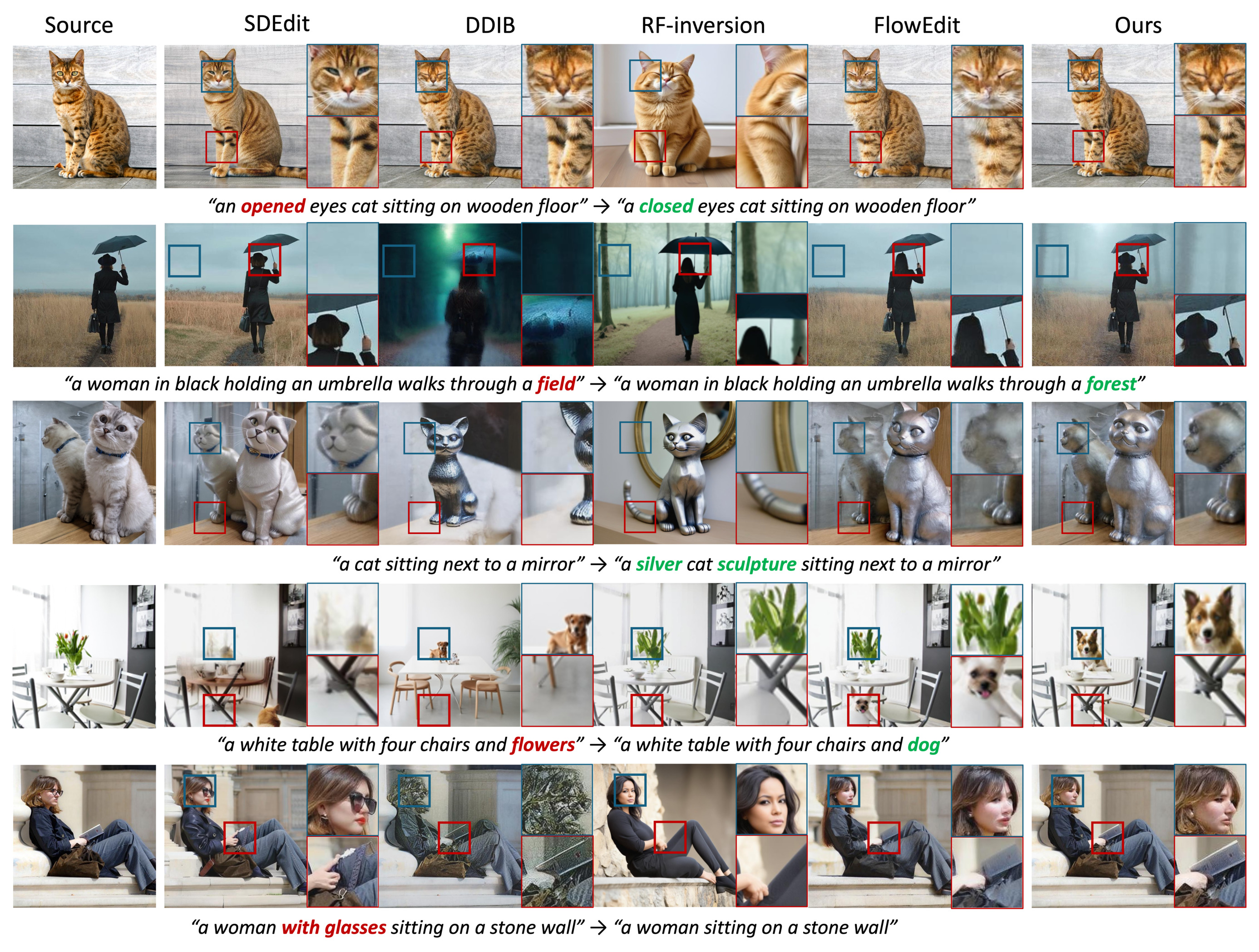}
    \vspace{-0.6cm}
    \caption{Qualitative comparison of text-based image editing methods. Insets provide zoomed-in views of regions highlighted by red and blue rectangles. Our method achieves better semantic alignment and structure consistency across a diverse set of prompts.}
      \vspace{-0.3cm}
    \label{fig:qual}
\end{figure}

\noindent\textbf{Semantic alignment and Structure Consistency.}
Image editing quality should be evaluated from two complementary perspectives: semantic alignment and source structure consistency.
An effective editing method should achieve a balance between these two objectives, improving both simultaneously.
To assess our method, we follow prior works and report CLIP similarity as a measure of semantic alignment and background PSNR as a proxy for structural consistency. We evaluate performance across various classifier-free-guidance (CFG) scales \{5.0, 7.5, 10.0, 13.5\}, as CFG is a key parameter that modulates the trade-off between semantic fidelity and structure consistency in flow-based, text-guided image editing.
Figure~\ref{fig:metric}a presents the quantitative evaluation results. For the complete results, please refer to Appendix~\ref{sec:complete}.
While the trade-off between semantic alignment and structure consistency remains, the proposed method consistently achieves higher structural preservation compared to all other methods.
In terms of semantic alignment, measured by CLIP similarity, the proposed method outperforms SDEdit and RF-Inversion, while FlowEdit (at certain CFG scales) and DDIB achieves higher scores.
However, these baselines tend to increase CLIP similarity at the cost of source structure consistency. In many cases, their high CLIP scores result from over-expression of target prompt objects, often distorting the original image as shown in Figure~\ref{fig:qual}.

%
    \begin{table}[!hbt]
    \centering
    \resizebox{0.7\linewidth}{!}{
    \begin{tabular}{lccccc}
    \toprule
         Metric & SDEdit~\cite{meng2021sdedit} & DDIB~\cite{su2022dual} & RF-Inv~\cite{rout2025semantic} & FlowEdit~\cite{kulikov2024flowedit} & Ours \\
         \midrule
         PSNR $\uparrow$ & 13.83 & 18.18 & 12.14 & 19.88 & \textbf{27.42} \\
         DINO Dist $\downarrow$ & 0.078 & 0.041 & 0.113 & 0.037 & \textbf{0.025} \\
         LPIPS $\downarrow$ & 0.419 & 0.190 & 0.502 & 0.147 & \textbf{0.085} \\
         MSE $\downarrow$ & 0.043 & 0.019 & 0.065 & 0.012 & \textbf{0.006} \\
         \bottomrule
    \end{tabular}
    }
    \caption{Quantitative comparison for backward editing results. \textbf{Bold} represents the best.}
    \label{tab:recon_metric}
     \vspace{-0.3cm}
\end{table}

\noindent\textbf{Human evaluation.}
Due to limitations of current metrics, such as CLIP similarity (as discussed in the previous section), we additionally conduct a human preference study.

From a pool of 700 validation samples, we randomly select 100 images along with their corresponding original and editing text prompts. For each participant, we present a pairwise comparison between the edited image produced by the proposed method and that of a randomly selected baseline. 
Participants are asked to indicate which result they prefer, that is defined as one with accurate reflectance of editing prompt with source structure preservation. For a more detailed description of the human evaluation protocol, refer to Appendix~\ref{sec:human}.
Figure~\ref{fig:metric}b summarizes the preference ratio of out method against each baseline. Across all comparisons, the proposed method is more preferred. These results demonstrates that our method achieves improved editing fidelity and superior source structure consistency, as intended.

\noindent\textbf{{Backward Editing.}}
\label{subsec:recon}
In this work, we proposed a flow matching regularization to make smoother trajectory between two image samples. Thus, we investigate whether the proposed trajectory \eqref{eqn:final} behaves deterministically between two samples. To test this, we solve \eqref{eqn:final} in backward direction starting from the edited image.
By evaluating the similarity between the source image and reconstructed image, we can evaluate whether the editing method  indeed shows behavior like an ODE.
We compute both pixel-wise metrics (PSNR, MSE) and perceptual metrics (LPIPS, DINO structural distance) and report the results in Table~\ref{tab:recon_metric}. Across all metrics, the proposed method outperforms the baselines. These results suggest that reverse editing using the proposed method nearly reconstructs the source image, supporting the effectiveness of the proposed trajectory regularization.
Figure~\ref{fig:recon} shows a qualitative comparison for the edited image and reconstructed image. For example, in the last column, the proposed method uniquely reconstruct the torch of the statue to match the source image, whereas the baselines fail to reconstruct it accurately and instead convert it into a real torch. Importantly, the edited image obtained by the proposed method faithfully reflects the intended editing direction. This implies that the observed reconstruction ability stems from the smooth trajectory, rather than simply reducing changes during the editing process.

\begin{figure}[!t]
    \centering
    \includegraphics[width=\linewidth]{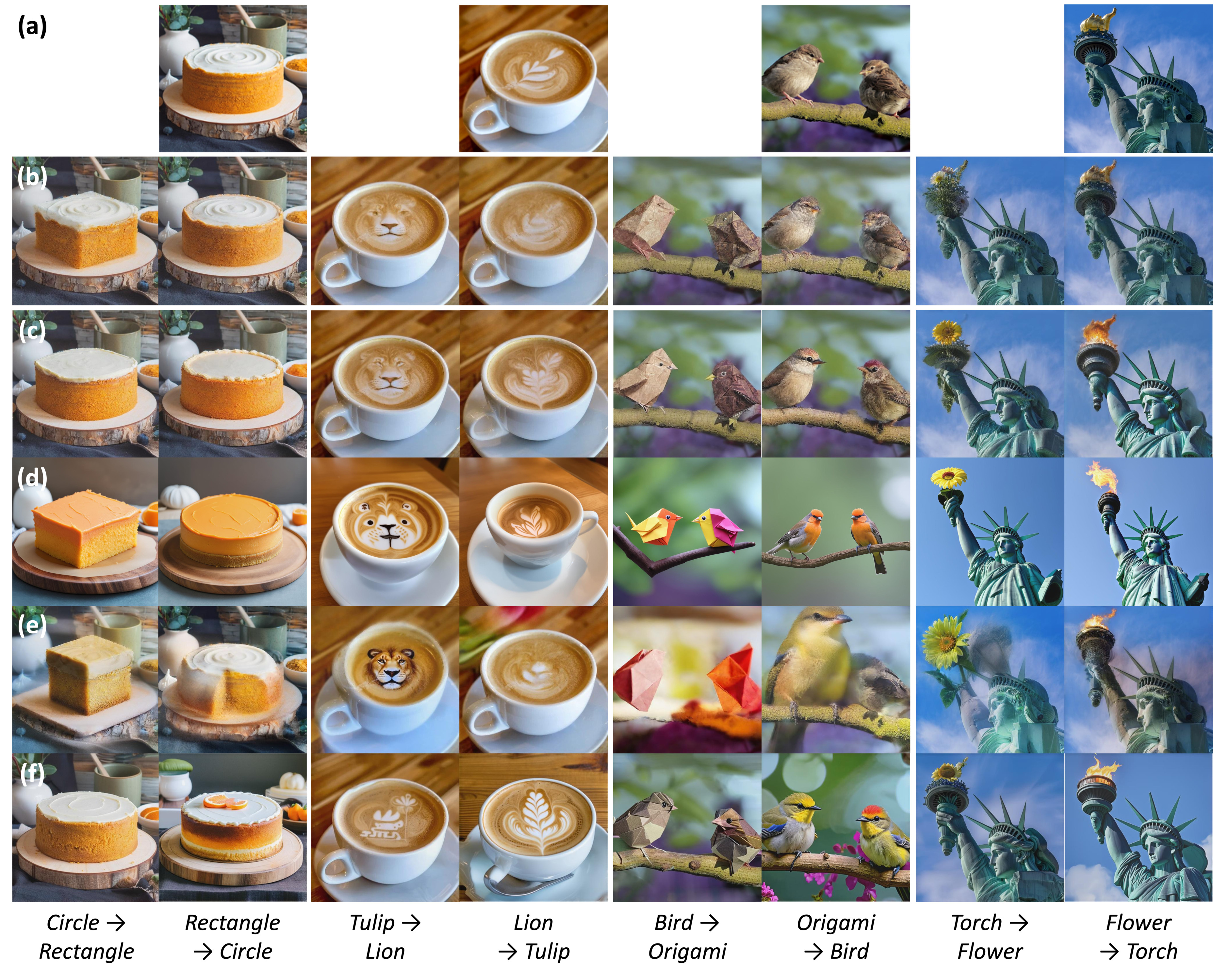}
    \vspace{-0.5cm}
    \caption{Qualitative comparison of editing (odd columns) and backward editing (even columns) results. FlowAlign (the 2nd row) achieves better reconstruction quality compared to the baselines. (a) Source image, (b) FlowAlign, (c) FlowEdit, (d) RF-inversion, (e) DDIB, and (f) SDEdit.}
    \label{fig:recon}
\end{figure}

\begin{wrapfigure}{r}{0.48\textwidth}
  \begin{center}
\vspace{-0.5cm}
    \includegraphics[width=0.4\textwidth]{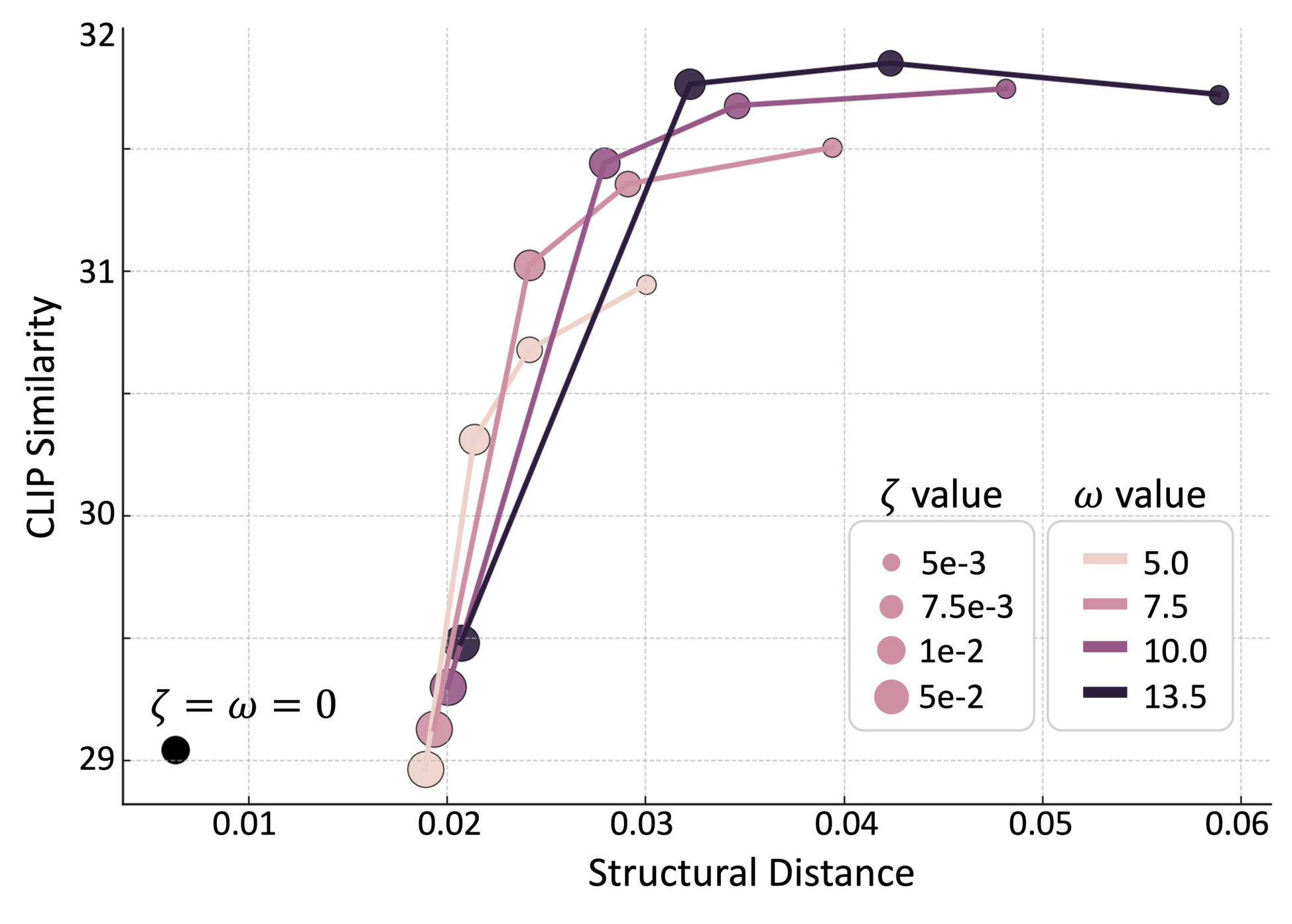}
  \end{center}
 \vspace{-0.6cm}
  \caption{Ablation study for $\omega$ and $\zeta$. Top-left points represent to balanced performance.}
  \vspace{-0.2cm}
  \label{fig:abl}
\end{wrapfigure}

\noindent\textbf{Ablation Study.}
\label{sec:abl}
The proposed method introduces two hyperparameters, $\omega$ and $\zeta$, which control the relative strength of each regularization term in \eqref{eqn:edit_reg}. These parameters govern the trade-off between semantic alignment with the target prompt and structural consistency with the source image.
To evaluate their impact, we conduct an ablation study under various settings. As shown in Figure~\ref{fig:abl}, we observe that setting $\zeta=0.01$ consistently achieves a favorable balance between semantic alignment and structure preservation.
Notably, the framework without any additional gradients (i.e. $\omega=\zeta=0$, block dot) results in insufficient editing, which appears at the left-bottom position. This result emphasizes the effectiveness of the proposed CFG.

\begin{figure}[!t]
    \centering
    \includegraphics[width=\linewidth]{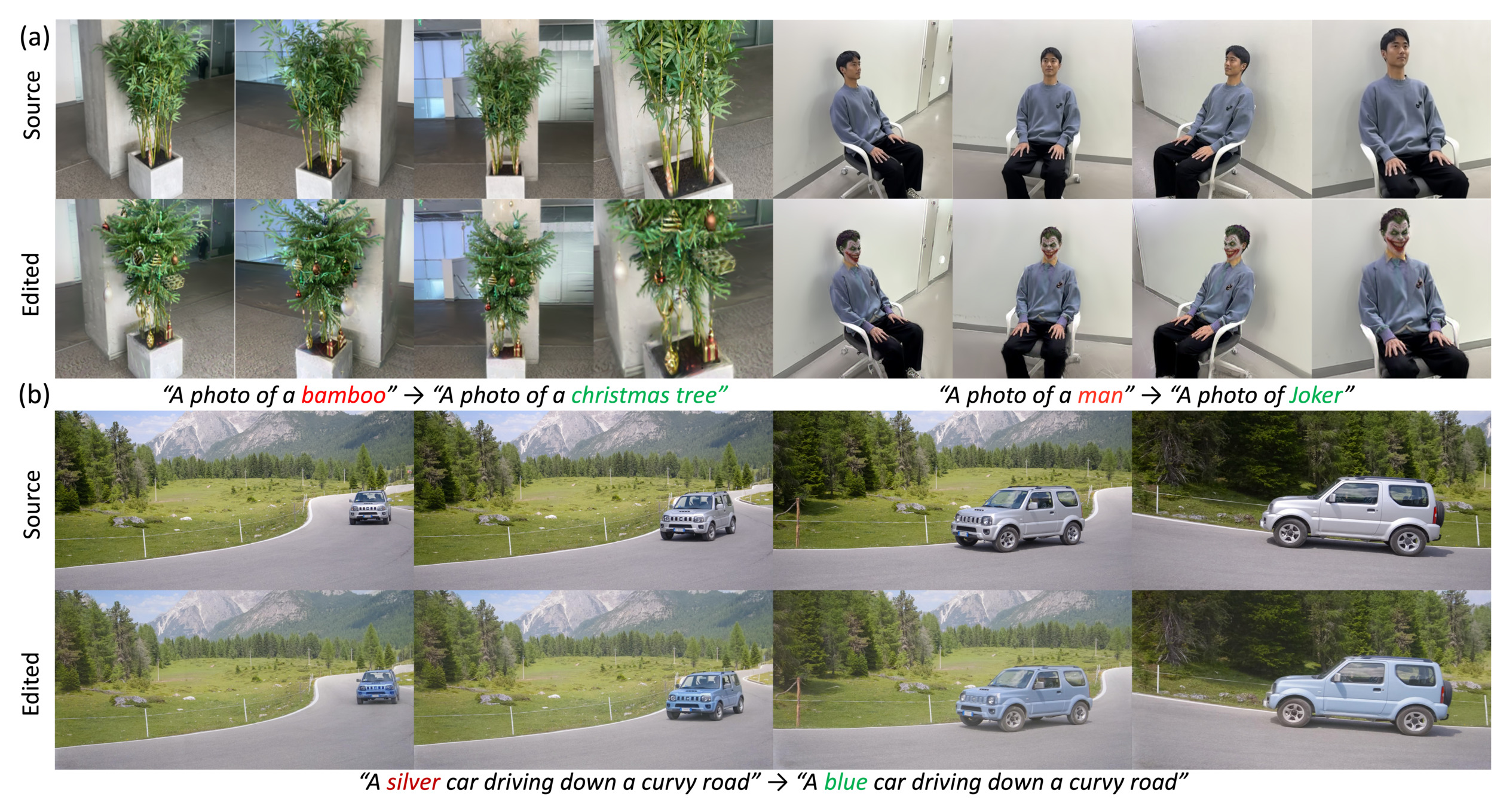}
    \vspace{-0.4cm}
    \caption{Editing results for (a) 3D Gaussian splatting rendered from four different viewpoints, and (b) a video sequence edited frame-by-frame.}
    \label{fig:further}
\end{figure}

\noindent\textbf{Further Applications.}
Figure~\ref{fig:further}a demonstrates the  application of FlowAlign to 3D editing via Gaussian splatting, highlighting its effectiveness in editing Gaussian parameters using FlowAlign in place of standard score distillation—thus extending its utility beyond 2D image editing.
Figure~\ref{fig:further}b illustrates the application of FlowAlign to video editing, where it is applied independently to each frame. Although temporal consistency is not explicitly enforced, the strong source consistency of FlowAlign results in visually coherent backgrounds across frames.
Additional details and results for these applications are provided in Appendix~\ref{sec:additional}.

\section{Conclusion}
In this paper, we propose a flow matching regularization, that leads to smooth and stable editing trajectory, for inversion-free flow-based image editing algorithm. By defining differentiable optimal control problem with similarity regularization at the terminal point, we explicitly balances semantic alignment with editing text and structural consistency with the source image. 
As a result, the proposed method achieves comparable or better editing performance while showing superior source consistency and better computational efficiency.
Notably, we demonstrate that the simulated ODE exhibits a deterministic property by performing the reverse editing experiments.
In summary, this work provides a novel design space for approximating ODEs between two samples without requiring additional training.

\noindent\textbf{Limitation and Potential Negative Impacts.}
While the proposed method offers efficient training-free image editing algorithm, it requires multiple diffusion timesteps for inference. 
Because the editing direction relies on the prior knowledge of pre-trained diffusion model, our method inherits potential negative impacts of generative models such as biased generation.


\bibliographystyle{plain}
\bibliography{reference}




\clearpage

\clearpage
\appendix

\begin{center}
  {\LARGE \bfseries Supplementary Material}\\[0.5em]
  {\large FlowAlign: Trajectory-Regularized,  Inversion-Free Flow-based Image Editing}
\end{center}
\vspace{1em}

\section{Optimal Control Formulation to Prove Proposition~\ref{prop:rectflow}}

The following lemma is required for the proof.

\begin{lemma}\label{lem:oc}
Consider the following time-reversal optimal control problem:
\begin{align}
 V(\vu_t) =  \int_{0}^1  \ell (\vx_t,\vu_t,t) dt + m (\vx_1),  & \quad \dot\vx_t = \vu(\vx_t) ,\quad \vx_{t_0} = \vx_{start},
\end{align}
where
\begin{align}
\ell (\vx_t,\vu_t,t) & := \frac{1}{2}\|\vu_t-\va_t \|^2  \\
m(\vx_1) &:= \frac{\eta}{2}\|\vx_0-\vx_{src}\|^2
\end{align}
Then, the optimal solution trajectory is given by
\begin{align}
\dot \vx_t = -\pb_t +\va_t 
\end{align}
where $\pb_t$ is given by
\begin{align}
\pb_t =   \frac{-\eta}{1-\eta t} \left(\vx_{t}+\vv_{t} -\vx_{src} \right) 
&\quad \mbox{where} \quad
 \vv_t:= \int_t^0 \va_t dt 
\end{align}
\end{lemma}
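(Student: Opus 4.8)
The plan is to solve the time-reversal optimal control problem in Lemma~\ref{lem:oc} via the Pontryagin Maximum Principle (PMP), treating the terminal cost $m$ and running cost $\ell$ as specified. First I would form the Hamiltonian $H(\vx_t,\vu_t,\pb_t,t) = \langle \pb_t, \vu_t\rangle + \ell(\vx_t,\vu_t,t)$, where $\pb_t$ is the adjoint (costate) variable. Minimizing $H$ over the control $\vu_t$ gives the stationarity condition $\pb_t + (\vu_t - \va_t) = 0$, i.e. the optimal control is $\vu_t = \va_t - \pb_t$, which immediately yields the claimed state equation $\dot\vx_t = -\pb_t + \va_t$. Since $\ell$ does not depend on $\vx_t$ and the dynamics $\dot\vx_t = \vu_t$ do not depend on $\vx_t$ either, the costate equation $\dot\pb_t = -\partial H/\partial\vx_t$ reduces to $\dot\pb_t = 0$, so $\pb_t$ is constant along the trajectory. (Care is needed here with the sign conventions and time direction: the problem is posed as a time-reversal control running from $t=1$ backward to $t=0$, so I would be explicit about whether $dt$ is positive or negative and where the terminal/transversality condition is imposed.)

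Next I would impose the transversality condition at the free endpoint. Because the running integral is over $[0,1]$ and the terminal cost is $m(\vx_0) = \frac{\eta}{2}\|\vx_0 - \vx_{src}\|^2$ (evaluated at $t=0$, the terminal time of the reversed flow), the boundary condition on the costate at $t=0$ is $\pb_0 = \pm\nabla m(\vx_0) = \pm\eta(\vx_0 - \vx_{src})$, with the sign fixed by the chosen Hamiltonian convention. Combining $\pb_t \equiv \pb_0$ (constant) with the state equation, I would integrate $\dot\vx_t = -\pb_t + \va_t$ from $t$ down to $0$: this gives $\vx_0 = \vx_t + \int_t^0(-\pb_s + \va_s)\,ds = \vx_t - \pb_t\,(0-t)\cdot(-1)\text{-type term} + \vv_t$ where $\vv_t := \int_t^0 \va_s\,ds$. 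Collecting terms, $\vx_0 - \vx_{src} = \vx_t + \vv_t - \vx_{src} + (\text{coefficient})\,\pb_t$, and substituting $\pb_t = \eta(\vx_0 - \vx_{src})$ and solving the resulting linear equation for $\pb_t$ yields $\pb_t = \frac{-\eta}{1-\eta t}(\vx_t + \vv_t - \vx_{src})$ after the algebra with the $1-\eta t$ denominator emerging from isolating $\pb_t$.

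The main obstacle I anticipate is bookkeeping the sign and time-orientation conventions consistently: the "time-reversal" framing means the dynamics run from $\vx_1 = \vx_{start}$ at $t=1$ toward $t=0$, so the integration limits, the direction of $dt$, and the sign in the transversality condition all must be aligned so that the $\frac{-\eta}{1-\eta t}$ factor (rather than, say, $\frac{\eta}{1+\eta t}$) comes out correctly, and so that the final $\gamma = \frac{-\eta}{1-\eta t}$ in Proposition~\ref{prop:rectflow} is reproduced. A secondary subtlety is the well-posedness of the linear solve for $\pb_t$: one needs $1 - \eta t \neq 0$ on the relevant range, which should be noted (it holds for $\eta$ outside $[1/t,\infty)$, or equivalently the formula is to be read as the formal stationary solution). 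Once the conventions are pinned down, the rest is a short linear ODE computation, and then Proposition~\ref{prop:rectflow} follows by specializing $\va_t$ to the linear-conditional-flow drift $\vv_t(\vp_t,c_{tgt}) - \vv_t(\vq_t,c_{src})$, using the affine structure $\vx_t = \vp_t - \vq_t + \vx_{src}$ together with the Tweedie identities $\mathbb{E}[\vq_0|\vq_t] = \vq_t - t\vv_t(\vq_t,c_{src})$ and $\mathbb{E}[\vp_0|\vp_t] = \vp_t - t\vv_t(\vp_t,c_{tgt})$ to rewrite $\vx_t + \vv_t - \vx_{src}$ in terms of the Tweedie estimates, giving the $\gamma(\mathbb{E}[\vp_0|\vp_t] - \mathbb{E}[\vq_0|\vq_t])$ correction term.
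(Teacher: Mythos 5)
Your proposal is correct and follows essentially the same route as the paper: form the Hamiltonian $H=\tfrac{1}{2}\|\vu_t-\va_t\|^2+\pb_t^T\vu_t$, get $\vu_t^*=\va_t-\pb_t$ from stationarity, note $\dot\pb_t=0$ since neither $\ell$ nor the dynamics depend on $\vx_t$, impose the transversality condition $\pb_0=\eta(\vx_0-\vx_{src})$ at the terminal time $t=0$, and integrate the state equation to solve the linear equation for the constant costate, which produces the $1-\eta t$ denominator. The only detail you leave open (the $\pm$ sign in the transversality condition) is resolved in the paper by taking $\pb_0=+\nabla m(\vx_0)$, exactly as your convention-alignment remark anticipates.
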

\begin{proof}
The Hamiltonian for the given optimal control problem can be represented by
\begin{align}
H(\vx_t,\vu_t,\pb_t,t):=  \frac{1}{2}\|\vu_t-\va_t \|^2  + \pb_t^T\ub_t
\end{align}
The optimal control $\vu_t^*$ that minimizes the Hamitonian is then given by
\begin{align}
\vu_t^*=\va_t-\pb_t
\end{align}
This leads to the following
\begin{align}
H(\vx_t,\vu^*_t,\pb_t,t) := & -\frac{1}{2}\|\vp_t\|^2+ \pb_t^T\va_t 
\end{align}
Then, the minimum principle \cite{basar2020lecture, fleming2012deterministic} informs that the optimal pair $(\vx_t,\vp_t)$ should satisfy the following:
\begin{align}
\dot \vx_t  &= \vu_t^*=\va_t-\pb_t
 \\
\dot \pb_t &= - \frac{\partial H(\vx_t,\vu^*_t,\pb_t,t) }{\partial \vx} = \mathbf{0} \label{eq:pb}
\end{align}
with the additional boundary conditions
\begin{align}\label{eq:pb1}
\vx_{t_0} = \vx_{start}, &\quad  \pb_0 =  \left. \frac{\partial m(\vx_t,t)}{\partial \vx_t}\right|_{t=0} = \eta (\vx_0 -\vx_{src}) 
\end{align}
From \eqref{eq:pb}, we can see that $\pb_t$ is time-invariant constant, i.e. $\pb_t=\pb$ for all $t\in [0,1]$.
Accordingly, we have
\begin{align}
\vx_0 &= \vx_{t_0} + \int_{t_0}^0 -\pb +\va_tdt \\
&= t_0  \pb +  \vx_{t_0} +\vv_{t_0}
\end{align}
where 
\begin{align}
\vv_{t_0}:= \int_{t_0}^0 \va_t  dt 
\end{align}
By plugging this in \eqref{eq:pb1} with $\pb_0=\pb$, we have
\begin{align*}
\pb = \eta\left(  t_0  \pb +  \vx_{t_0} +\vv_{t_0}-\vx_{src} \right) &\quad \Rightarrow \pb = \frac{\eta(\vx_{t_0} +\vv_{t_0}-\vx_{src})}{1-t_0\eta}
\end{align*}
Therefore, the optimal control  $\vu_t^*$ is given by
\begin{align}
\vu_t^* 
&=  \va_t + \frac{-\eta}{1-\eta t} \left(\vx_{t}+\vv_{t} -\vx_{src} \right) 
\end{align}
\end{proof}

Now we are ready to prove our main results.
\rectflow*
\begin{proof}
We can now use  Lemma~\ref{lem:oc} with the following modification
\begin{align}
\va_t := \vv_t^\theta(\vp_t, \vc_{tgt}) - \vv_t^\theta(\vq_t, \vc_{src}) 
\end{align}
This leads to  the following first order approximation:
\begin{align}
\vv_{t}  &= \int_t^0 \va_t dt =  \int_t^0 \vv_t(\vp_t) -\vv_t(\vq_t) dt \\
&\simeq -t \vv_t(\vp_t) + t \vv_t (\vq_t)
\end{align}
Accordingly, we have
\begin{align*}
\vx_{t}+\vv_{t} -\vx_{src}   &= \vp_t-\vq_t  +\vv_t \notag\\
&\simeq  \vp_t-\vq_t + (-t \vv_t(\vp_t) + t \vv_t (\vq_t))  \\
&=\mathbb{E}[\vp_0|\vp_t]-\mathbb{E}[\vq_0|\vq_t]
\end{align*}
where we use  $\vx_{t} -\vx_{src} = \vp_t-\vq_t $ from \eqref{eqn:constraint} for the first
equality. 
Thus, the optimal control  $\vu_t^*$ is given by
\begin{align}
\vu_t^* 
&=  \va_t + \frac{-\eta}{1-\eta t} \left(\vx_{t}+\vv_{t} -\vx_{src} \right) \notag\\
&\simeq  \va_t +\frac{-\eta}{1-\eta t}\left(\mathbb{E}[\vp_0|\vp_t]-\mathbb{E}[\vq_0|\vq_t]\right) 
\end{align}
Therefore, we have
\begin{align}
\dot \vx_t &= \vv_t(\vp_t, \vc_{tgt}) - \vv_t(\vq_t, \vc_{src}) +\frac{-\eta}{1-\eta t} \left(\mathbb{E}[\vp_0|\vp_t]-\mathbb{E}[\vq_0|\vq_t]\right)\\
&= \vv_t(\vp_t, \vc_{tgt}) - \vv_t(\vq_t, \vc_{src}) +\gamma \left(\mathbb{E}[\vp_0|\vp_t]-\mathbb{E}[\vq_0|\vq_t]\right)
\label{eqn:final_drift}
\end{align}

\end{proof}

\section{Implementation details}
\label{sec:details}
In this section, we provide implementation details of baseline methods and the proposed method. For the proposed method, we will release the code to \url{https://github.com/FlowAlign/FlowAlign}.

\paragraph{Backbone model}
While various text-to-image generative models exist, we leverage a flow-based model defined in latent space, specifically using Stable Diffusion 3.0 (medium)~\cite{esser2024scaling} provided by the diffusers package.
Given the significantly improved generative and text alignment performance of this backbone compared to earlier versions, we use the same backbone model for all baselines to establish a fair comparison.
For the time discretizations, we set the shift coefficient to 3.0, which is a default option of the Stable Diffusion 3.0


\paragraph{Baseline methods}
\label{sec:baseline}
\begin{enumerate}
    \item DDIB \cite{su2022dual} : 
    DDIB involves an inversion process followed by a sampling process. For inversion, we adopt the backward flow ODE. Regarding classifier-free guidance (CFG), we use only the null-text embedding during inversion and both target text and null-text embeddings during the sampling. This choice is motivated by the instability observed when applying standard CFG (i.e., using the source text and null-text embeddings) during inversion. 
    To ensure a fair comparison under similar computational cost, we set the number of ODE timesteps to 17 for both the inversion and sampling processes.
    
    \item SDEdit \cite{meng2021sdedit} : 
    SDEdit requires specifying the initial SNR (i.e. timestep), and its performance can vary significantly depending on this choice. In this work, our main focus is to address the limitations of inversion-free editing methods. To fairly demonstrate the effectiveness of our approach in comparison to alternative methods that also can mitigate this issue, we adopt the same initial SNR setting as FlowEdit~\cite{kulikov2024flowedit}, determined by the starting timestep of the ODE. Specifically, we use the 18th timestep as the starting point in our experiments.

    \item RF-inversion \cite{rout2025semantic} :
    RF-Inversion introduces an optimal-control-based guidance mechanism that ensures the inverted representation aligns with a target terminal state, resuling in a sampling process that is more likely under a predefined terminal distribution. We follow the official implementation, setting $\gamma=0.5$, $\eta=0.9$, the starting time $s=0$, and the stopping time $\tau = 0.25$. RF-Inversion uses only the null-text embedding, while both the target text and null-text embeddings are used during the sampling phase.
    
    \item FlowEdit \cite{kulikov2024flowedit} :
    We follow the official implementation of FlowEdit, setting the CFG scale to 3.0 for the source direction and 13.5 for the target direction. Additionally, we solve the flow ODE starting from the 18th timestep out of 50, resulting in 33 ODE timesteps.
    
\end{enumerate}

\paragraph{Evaluation Metrics}
For the quantitative comparison, we evaluate following metrics using the official evaluation code \footnote{\url{https://github.com/cure-lab/PnPInversion/tree/main/evaluation}} from PIEBench~\cite{ju2024pnp}:
\begin{enumerate}
    \item Background PSNR : PIEbench~\cite{ju2024pnp} provides masks that cover the object to be edited. Accordingly, we compute the PSNR by excluding the masked region, resulting in the background PSNR.
    \item Background LPIPS : We measure the LPIPS~\cite{blau2018perception}, which is defined as distance between feature maps of pre-trained VGG network, by excluding the masked region.
    \item Background SSIM : We compute the structural similarity~\cite{wang2004image} by excluding the masked region.
    \item Background MSE : we compute pixel-wise mean-squared-error by excluding the masked region.
    \item CLIP-score : We report the similarity between features embedded by pre-trained CLIP~\cite{radford2021learning}\footnote{We use CLIP ViT-base-patch16.} image encoder and text encoder. For the CLIP score within the edited region, we apply it only to the masked area.
\end{enumerate}




\section{Human preference test protocol}
\label{sec:human}
To evaluate the quality of image editing, we conduct a human preference study in the form of an AB-test. Specifically, we randomly sample 100 images from the 700 validation samples of PIEBench~\cite{ju2024pnp}. The study follows the protocol below for each participant:

\begin{enumerate}
    \item Randomly select one sample from the 100-image pool.
    \item Randomly choose one baseline method from the four.
    \item Randomly assign the baseline and the proposed method to gruops A and B.
    \item Display AB-test user instruction with editing instruction, source image, and edited results from both methods.
    \item The participant selects one of the following options: "A is better", "B is better", or "Not sure".
    \item The participant clicks "Submit", and the response is recorded.
    \item Step 1-6 are repeated until 20 cases are completed.
    \item If more than half of the responses are "Not sure", an additional 5 comparisons is presented following the same protocol.
\end{enumerate}

Although the editing methods utilize source–target text pairs, we present the editing instructions from PIEBench to participants instead, aiming to improve readability and reduce cognitive load.
%
The AB-test interface is shown Fig.~\ref{fig:ab_ui}, and was implement using Google Script. We recruited 25 participants and collected a total of 505 responses (with one participant receiving additional cases due to frequent "Not sure" selections). These responses were used to compute the final human preference results.

\begin{figure}[t]
    \centering
    \includegraphics[width=0.8\linewidth]{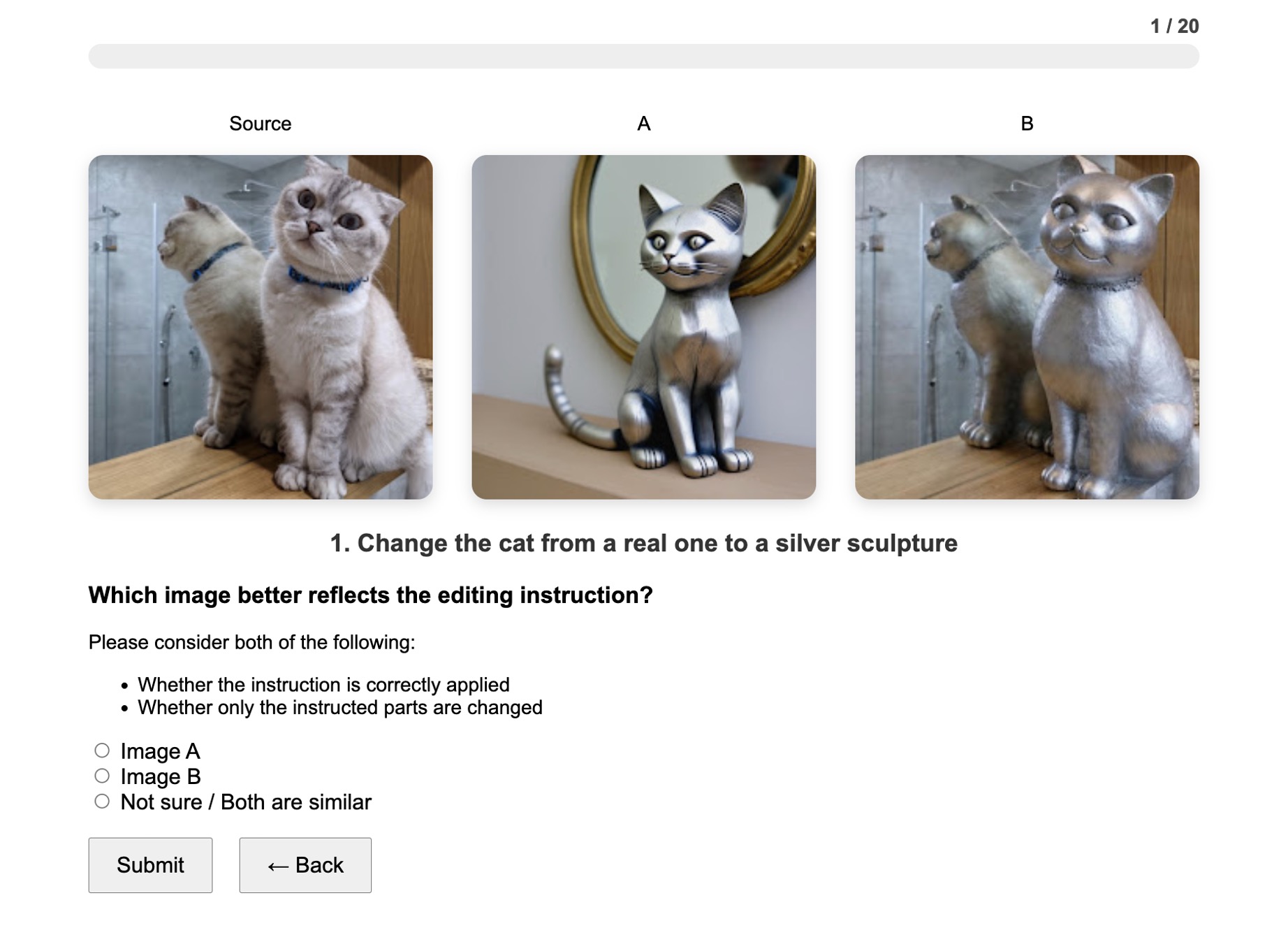}
    \caption{User interface for AB-test.}
    \label{fig:ab_ui}
\end{figure}

\section{Run-time comparison}
\label{sec:runtime}
In this section, we report the wall-clock time for each editing method evaluated in the main paper, specifically focusing on the runtime for solving ODE in latent space. We use a single RTX 4090 to measure the runtime.
For methods involving an inversion process, the number of function evaluations (NFEs) is effectively doubled compared to reverse sampling.
However, to ensure fair comparison, all experiments are conducted with the similar number of ODE timesteps where the sample $\vx_t$ is updated. Thus, for the inversion-based method, we update sample with 17 ODE timesteps for inversion and 17 ODE timesteps for the sampling.
Table~\ref{tab:runtime} shows the averaged runtime (in seconds) measured across 50 editing cases. 

For SDEdit, the runtime corresponds to standard reverse sampling with CFG. In comparison, DDIB and RF inversion achieve slightly shorter runtime, as we set the CFG scale to 0 during inversion. This is because applying CFG in the inversion stage often amplifies errors and disrupts the sampling process.
FlowEdit requires computing CFG twice per ODE timestep - once for the source direction and once for the target - which results in approximately double the runtime of SDEdit. In other words, when using the same discretized ODE schedule, FlowEdit incurs a similar computational cost to inversion-based methods.
In contrast, the proposed method achieves faster runtime due to its efficient CFG, which is computed only for $\vv(\vp_t)$. Importantly, this computational efficiency is achieved without sacrificing performance.


\begin{table}[H]
    \centering
    \resizebox{0.7\linewidth}{!}{
    \begin{tabular}{c|ccccc}
    \toprule
         & DDIB & SDEdit & RF-inv & FlowEdit & FlowAlign\\
    \midrule
         Stable Diffusion 3.0 & 4.06 & 5.26 & 4.06 & 10.64 & 7.14 \\
    \bottomrule
    \end{tabular}
    }
    \vspace{0.1cm}
    \caption{Runtime comparison (unit: seconds). Averaged time for 50 samples is reported.}
    \label{tab:runtime}
\end{table}


\section{Additional evaluation results on PIEBench}
\label{sec:complete}
\subsection{Quantitative and qualitative results}
We provide the complete evaluation result on PIEBench, which includes metrics described in Section~\ref{sec:details}.
As discussed in the main paper, the proposed method outperforms all baselines in source consistency metrics while achieving a CLIP score comparable to FlowEdit. Although inversion-based editing algorithms such as DDIB and RF-Inversion exhibit higher CLIP scores, they tends to overemphasize the target concept at the expense of preserving the source structure (see more examples in Fig.~\ref{fig:supple_qualitative0} and \ref{fig:supple_qualitative1}).
We also illustrate more qualitative results of the proposed method on text-based image editing task.
Specifically, Fig.~\ref{fig:supple_qualitative0} and \ref{fig:supple_qualitative1} present additional qualitative comparison between baselines and the FlowAlign. For diverse editing categories and objects, the proposed method shows better editing capability with better source structure preservation.

\begin{table}[t]
    \centering
    \resizebox{\linewidth}{!}{
    \begin{tabular}{lccccccc}
    \toprule
         & \multicolumn{5}{c}{Source Consistency} & \multicolumn{2}{c}{Semantic Alignment}\\
    \cmidrule(lr){1-1} \cmidrule(lr){2-6} \cmidrule(lr){7-8}
         method & Structural Distance $\downarrow$ & Background PSNR $\uparrow$ & Background LPIPS $\downarrow$& Background MSE $\downarrow$ & Background SSIM $\uparrow$ & CLIP Entire $\uparrow$& CLIP Edited $\uparrow$\\
         \midrule
         DDIB~\cite{su2022dual} & 0.106 & 14.43 & 0.262 & 0.037 & 0.746 & 30.68 & 26.88 \\
         SDEdit~\cite{meng2021sdedit} & 0.036 & 22.57 & 0.119 & 0.008 & 0.747 & 24.56 & 21.95 \\
         RF-Inv~\cite{rout2025semantic} & 0.103 & 15.82 & 0.291 & 0.028 & 0.663 & 33.35 & 29.60 \\
         FlowEdit~\cite{kulikov2024flowedit} & 0.036 & 23.02 & 0.082 & 0.007 & 0.842 & 25.98 & 22.81 \\
         FlowAlign &  0.028 & 25.50 & 0.053 & 0.004 & 0.879 & 25.28 & 22.00 \\
    \bottomrule
    \end{tabular}
    }
    \caption{Quantitative results for image editing on PIEBench (CFG scale 10.0).}
    \label{tab:my_label}
\end{table}

\begin{figure}[t]
    \centering
    \includegraphics[width=\linewidth]{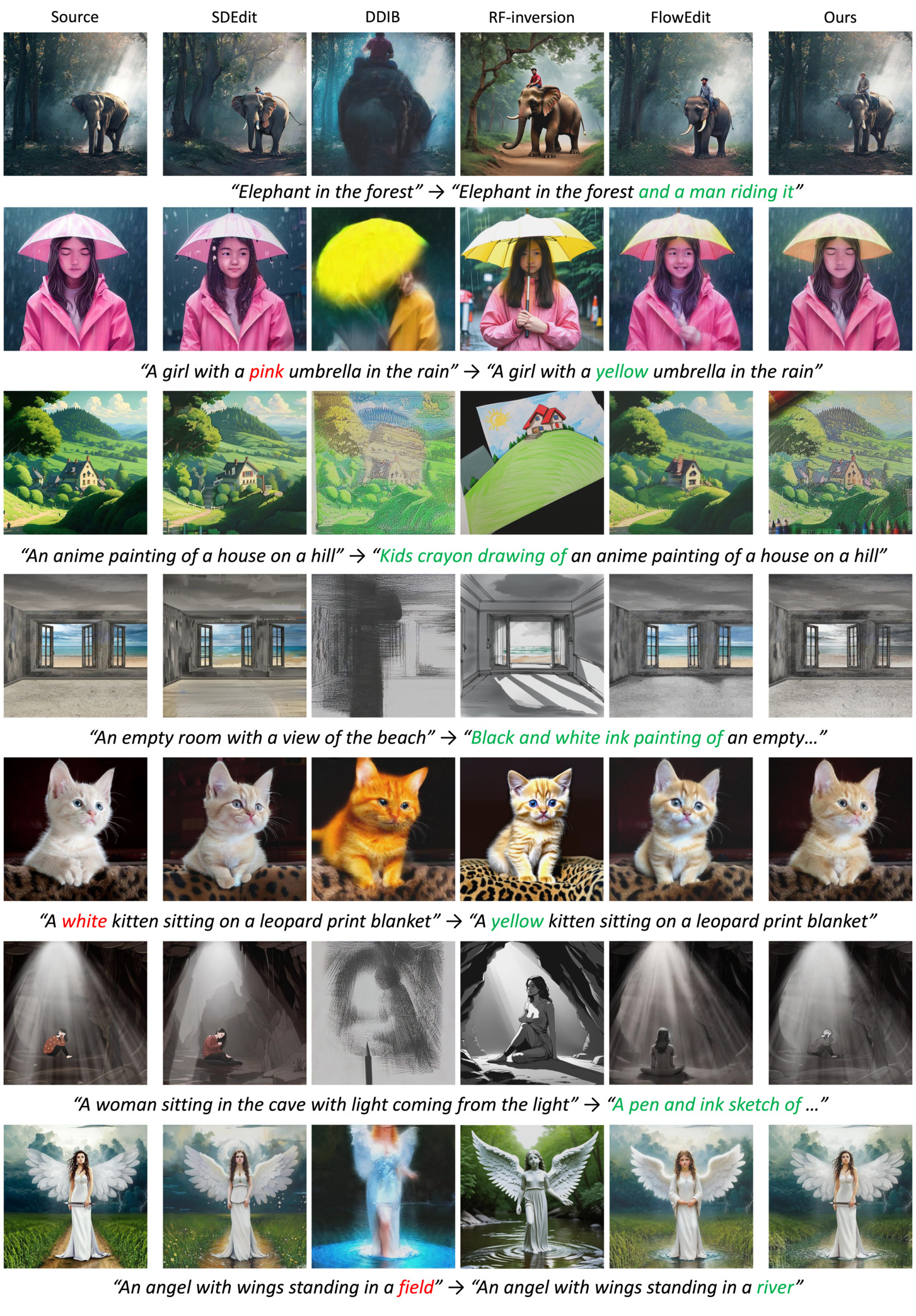}
    \caption{Additional qualitative comparison results.}
    \label{fig:supple_qualitative0}
\end{figure}

\begin{figure}[t]
    \centering
    \includegraphics[width=\linewidth]{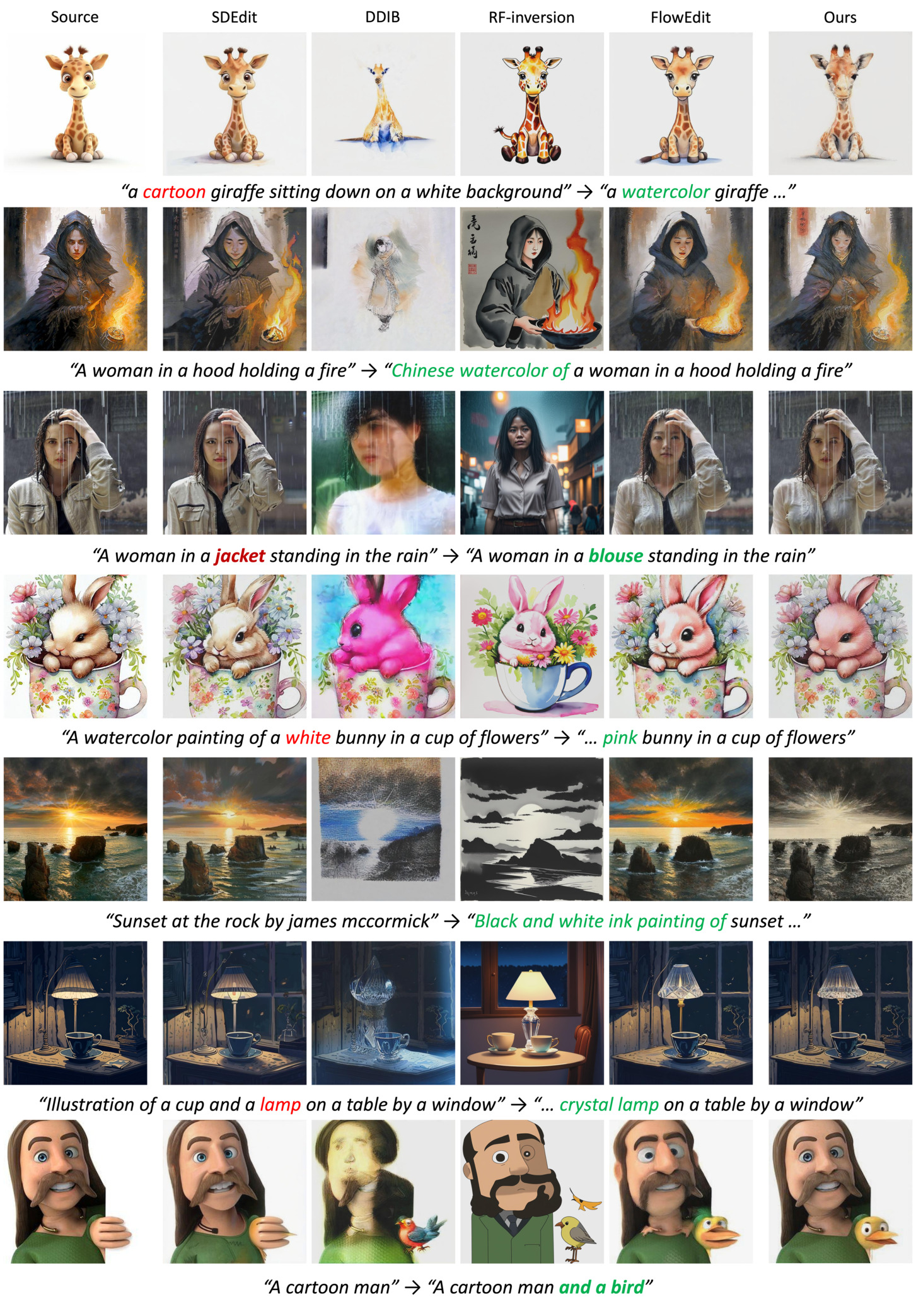}
    \caption{Additional qualitative comparison results.}
    \label{fig:supple_qualitative1}
\end{figure}

\subsection{Ablation}
The proposed method includes two hyper-parameters: $\omega$, which controls efficient CFG, and $\zeta$, which weights the source consistency term derived from the flow-matching regularization.
As discussed in Section~\ref{sec:abl} of the main paper, there is a trade-off between semantic alignment with the target text and structural consistency with the source image. 
Fig.~\ref{fig:abl_zeta} presents qualitative examples from the ablation study, focusing on the effect of $\zeta$ with a fixed $\omega=10.0$.
While the editing results may vary across samples even for the same $\zeta$, we find that $\zeta=0.01$ consistently yields robust and balanced performance, effectively satisfying both the editing instruction and source structure preservation across diverse cases.

\begin{figure}[t]
    \centering
    \includegraphics[width=\linewidth]{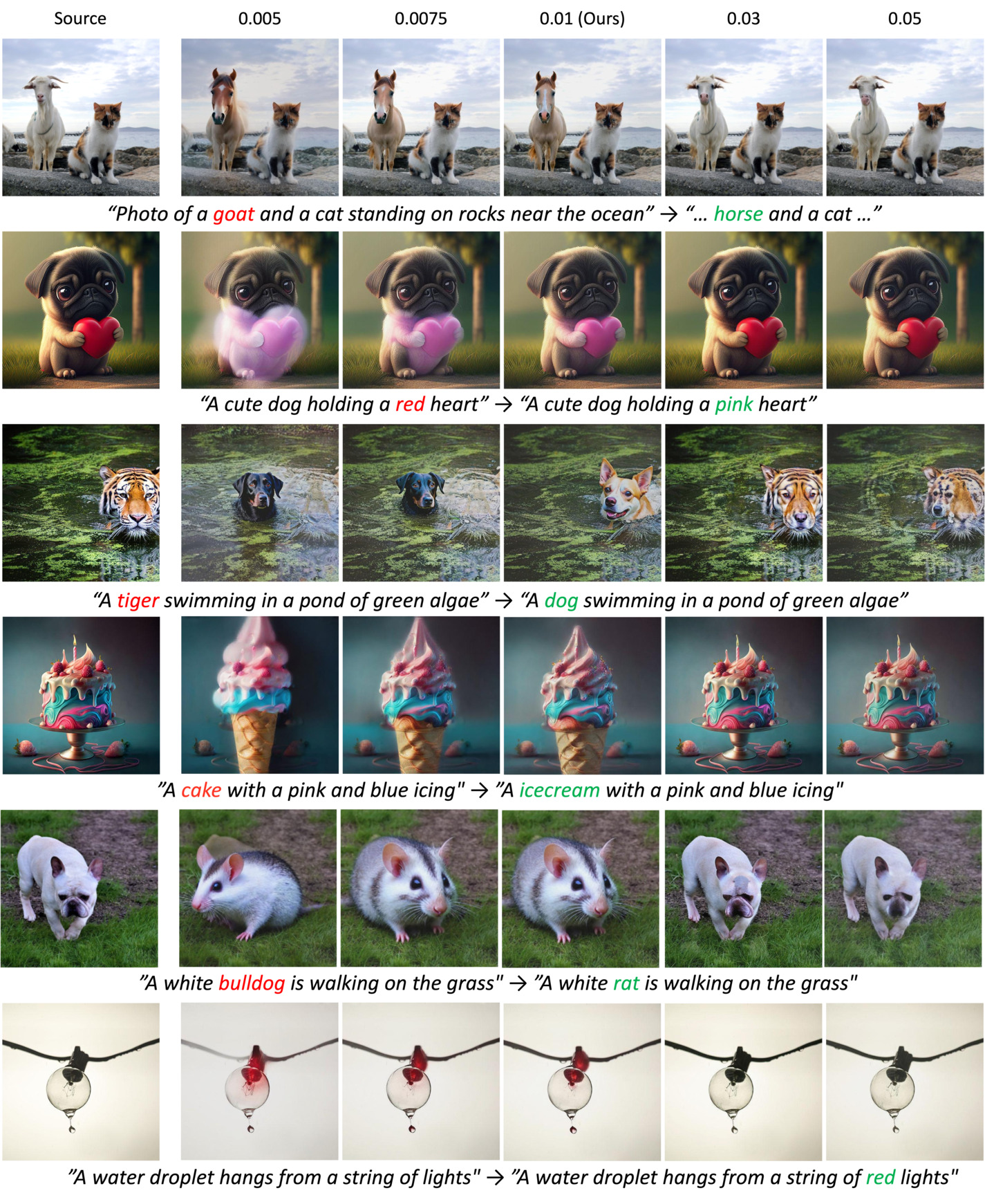}
    \caption{Ablation results for $\zeta$.}
    \label{fig:abl_zeta}
\end{figure}

\section{Additional results}
\label{sec:additional}
As we demonstrated in the last section of the main paper,  FlowAlign can be extended beyond text-based image editing. In this section, we provide details and additional examples for video editing and 3D editing via Gaussian splatting.

\subsection{Further Applications - Video Editing} 
FlowAlign is fundamentally a text-based image editing method built on an image flow-based model. In contrast, video editing typically relies on generative models trained on video datasets with temporal attention mechanisms to enhance temporal consistency~\cite{jeong2024vmc, liu2024video, jeong2024dreammotion, park2025spectral}.
However, video editing can also be viewed as a sequence of image editing tasks. Extending the applicable range of image editing methods to video could offer practical benefits, such as reduced training and inference costs.

While the proposed method does not explicitly enforce temporal consistency, we can still apply it independently to each video frame. For this experiment, we use the DAVIS dataset~\cite{Caelles_arXiv_2019} and extract the source text prompt using LLaVA~\cite{liu2023visual}, conditioned on the middle frame of each video.
For the hyperparameter $\lambda$, we use a constant of 0.01, consistent with the main experiments. For $\omega$, we vary it over [5.0, 7.5, 10.0, 13.5] and qualitatively select the best-performing value.

Figures~\ref{fig:app_video_1} and~\ref{fig:app_video_2} present examples of edited video frames, focusing on texture and object editing tasks.
Due to the strong source structure consistency of the proposed method, the video background remains well-preserved. However, temporal consistency for the edited object is limited, as no explicit constraint is imposed- for example, the head of the swan in Figure~\ref{fig:app_video_1}.
Nonetheless, these results highlight the potential of the proposed method as a lightweight solution for video editing.

\begin{figure}[t]
    \centering
    \includegraphics[width=\linewidth]{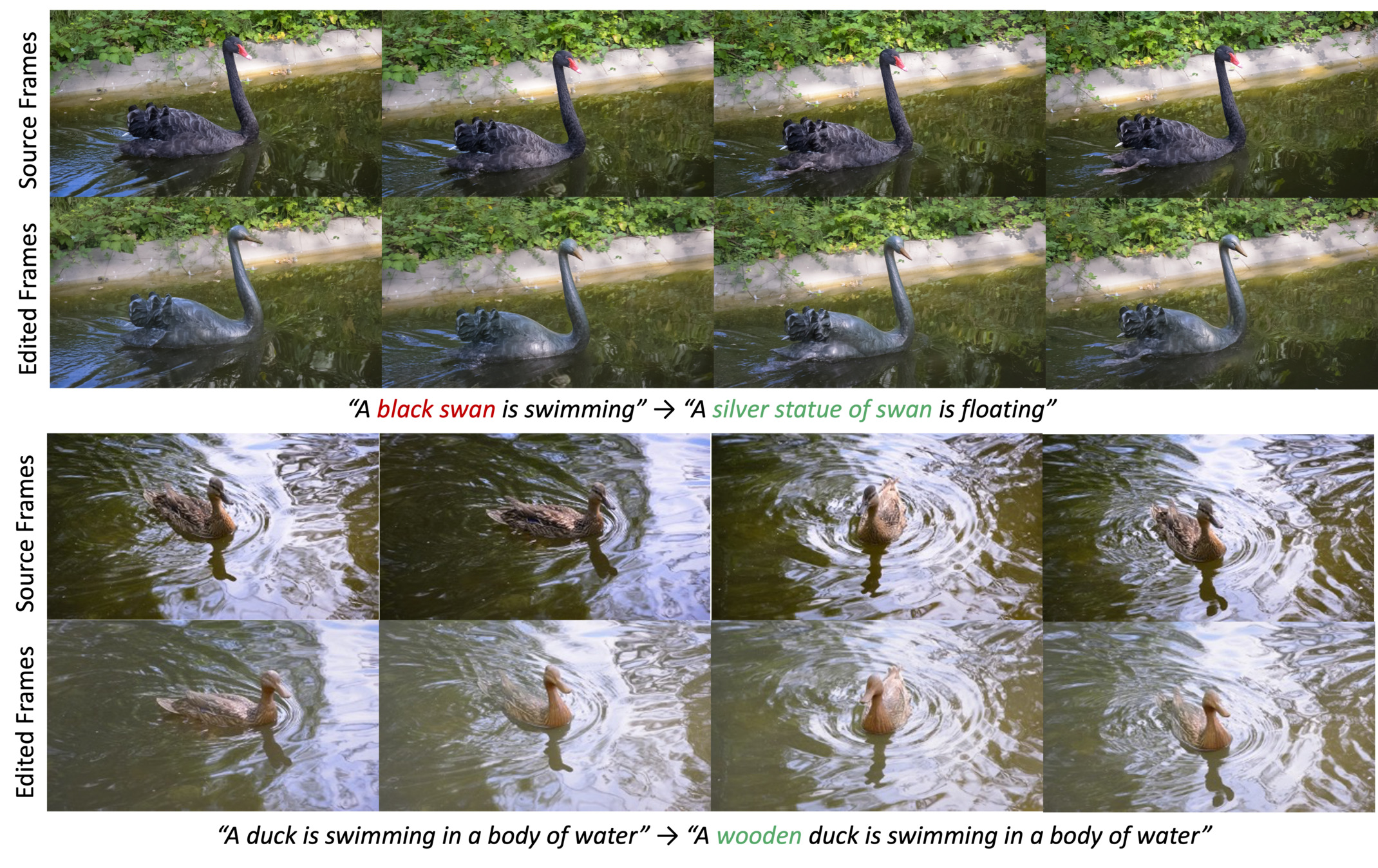}
    \caption{Additional results for video editing: texture change.}
    \label{fig:app_video_1}
\end{figure}

\begin{figure}[t]
    \centering
    \includegraphics[width=\linewidth]{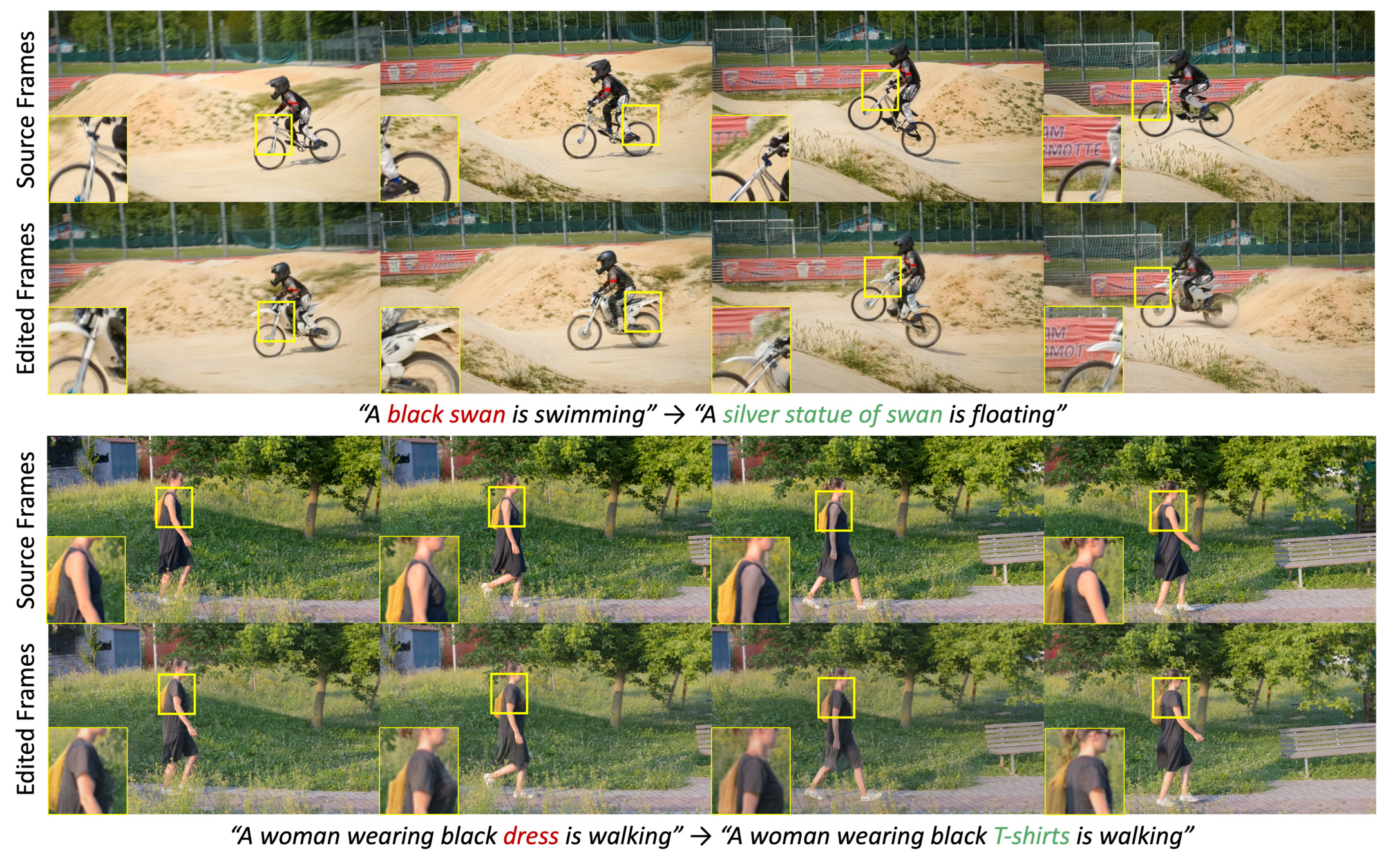}
    \caption{Additional results for video editing: object change.}
    \label{fig:app_video_2}
\end{figure}

\subsection{Further Applications - Gaussian Splatting Editing} 
Recall that one of the main contributions of FlowAlign is introducing a regularization term for inversion-free image editing method, which has a goal of simulating the flow ODE between two image samples.
Accordingly, we can apply FlowAlign for  a 3D editing via Gaussian splatting. The basic idea is to distill the knowledge of text-conditioned image prior of flow model to update Gaussians.

The common way to distill the prior knowledge to parameters of generators, such as NeRF or Gaussian Splatting, is to optimize those parameters with score distillation gradients. 
More generally, it involves guiding the parameters through the use of a pre-trained denoiser acting as a critic.
%
From the result of Proposition 1, we construct a regularized trajectory between source and target images and the drift term \eqref{eqn:final_drift} corresponds to gradient reflecting the editing direction. Specifically, we can define a editing loss function that satisfies
\begin{align}
\nabla_{\vx_t} \mathcal{L}_{FA} := \vv_t(\vp_t, \vc_{tgt}) - \vv_t (\vq_t, \vc_{src}) + \gamma (\mathbb{E}[\vp_0|\vp_t] - \mathbb{E}[\vq_0|\vq_t]),
\end{align}
where we assume that the Jacobians $\frac{\partial \vv_t(\vq_{t})}{\partial \vx_t}$ and $\frac{\partial \vv_t(\vp_{t})}{\partial \vx_t}$ are identity matrices.
Then, by using $\mathcal{L}_{FA}$ as a critic for updating Gaussians, we can guide them toward target Gaussian whose rendered views are high likely sample in perspective of flow model.
The only one we should consider is extending $\mathcal{L}_{FA}$ to contain parameters $\psi$ of differentiable generator $\vg$.

Because we use flow model defined in latent space, we compute the loss function using
$\vq_{t,c}= (1-t)\vx_{src, c} + t\epsilonb$, $\vp_{t,c} = \vq_{t,c} + \vx_{t,c} - \vx_{src, c}$.
Here, $c$ denotes sampled camera view, $\vx_{src, c} = \mathcal{E}(\vg(\psi_{src}, c))$ denotes the latent code of rendered view from initial Gaussian,  $\vx_{t, c} = \mathcal{E}(\vg(\psi, c))$ denotes the latent code of rendered vidw from current Gaussian, $\mathcal{E}$ denotes the pre-trained encoder of a VAE and $\epsilonb \sim \mathcal{N}(0, \boldsymbol{I})$.
 By using the chain rule, we obtain the following gradient for flow-based 3D editing:
\begin{align}
\nabla_{\psi} \mathcal{L}_{FA} &:=\left[  \nabla_{\vx_t}\mathcal{L}_{FA}\right]\frac{\partial \vx_t}{\partial \psi}\\
& = \left[\vv_t(\vp_{t,c}, \vc_{tgt}) - \vv_t(\vq_{t,c}, \vc_{src}) +\gamma 
(\mathbb{E}[\vp_0|\vp_t] - \mathbb{E}[\vq_0|\vq_t])\right] \frac{\partial \vx_t}{\partial \psi},
\label{eqn:gauss_grad}
\end{align}
and use gradient descent to update the $\psi$ as 
\begin{equation}
    \psi = \psi - \eta_t \nabla_\psi \mathcal{L}_{FA}.
\end{equation}
For the semantic guidance term in this gradient, we only apply CFG for $\vp_t$ by setting $c_{src}$ as null-text embedding, as same as the image editing algorithm.

We perform Gaussian Splatting optimization for $3,000$ iterations using a classifier-free guidance (CFG) weight $\omega$ set to either 70 or 100.
%
Experiments are conducted on real-world scenes using datasets from IN2N~\cite{haque2023instruct} and PDS~\cite{koo2024posterior}. For comparison, we benchmark against existing distillation methods, including SDS~\cite{poole2022dreamfusion}, DDS~\cite{hertz2023delta}, and PDS~\cite{koo2024posterior}, as well as the non-distillation baseline IGS2GS~\cite{igs2gs}. For baselines using distillation methods we used Stable Diffusion 3.0 and for IGS2GS~\cite{igs2gs}, we use UltraEdit~\cite{zhao2024ultraedit} in replacement of Instruct-Nerf2Nerf ~\cite{haque2023instruct} for flow-based models.
One can find the pseudocode for the 3D editing with FlowAlign in Algorithm~\ref{alg:method_3d}, and additional results on Figure~\ref{fig:3d_qualitative} and Figure \ref{fig:3d_video}.
\begin{figure}[t]
    \centering

    \includegraphics[width=\linewidth]{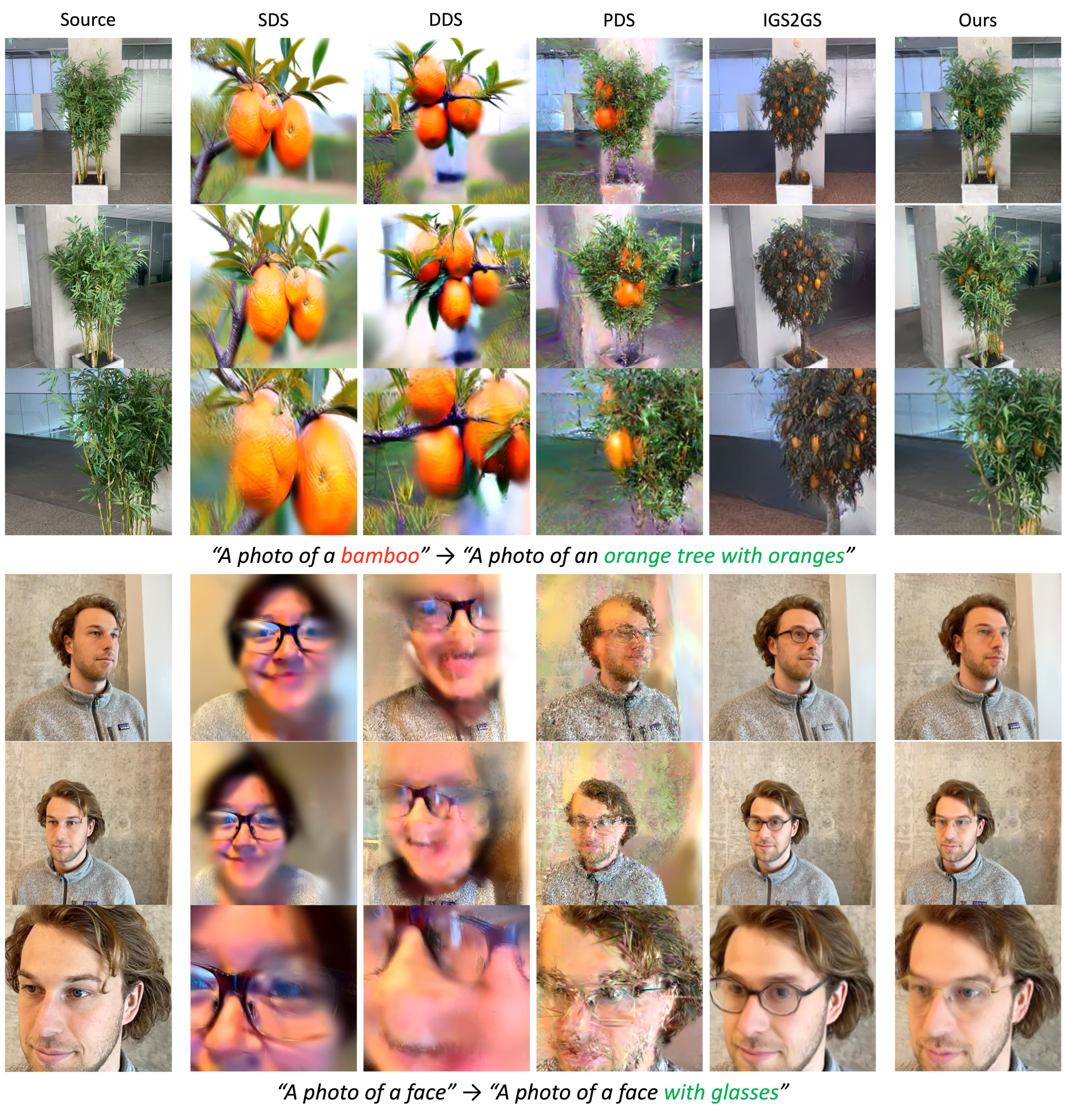}
    \vspace{4pt}
    \caption{Additional results for 3D editing.}
    \label{fig:3d_qualitative}

    \vspace{10pt}

    \newcommand{\columnSpacing}{0.2cm}
    \begin{tabular}{@{}p{0.48\linewidth}@{\hspace{\columnSpacing}}p{0.48\linewidth}@{}}
        \begin{minipage}[t]{\linewidth}
            \begin{tabular}{@{}p{0.48\linewidth}@{\hspace{0.04\linewidth}}p{0.48\linewidth}@{}}
                \animategraphics[loop, width=\linewidth]{6}{videos/bamboo/}{01}{31} &
                \animategraphics[loop, width=\linewidth]{6}{videos/orange_tree/}{01}{31}
            \end{tabular}
        \end{minipage}
        &
        \begin{minipage}[t]{\linewidth}
            \begin{tabular}{@{}p{0.48\linewidth}@{\hspace{0.04\linewidth}}p{0.48\linewidth}@{}}
                \animategraphics[loop, width=\linewidth]{6}{videos/face/}{01}{31} &
                \animategraphics[loop, width=\linewidth]{6}{videos/glasses/}{01}{31} \\
                \vspace{0.1cm} & \vspace{0.1cm} \\
                \animategraphics[loop, width=\linewidth]{6}{videos/yuseung/}{01}{31} &
                \animategraphics[loop, width=\linewidth]{6}{videos/joker/}{01}{31}
            \end{tabular}
        \end{minipage}
    \end{tabular}

    \caption{\footnotesize
        Video results of 3D editing. \textit{Click each image to play the video in Acrobat Reader.}
    }
    \label{fig:3d_video}
\end{figure}


\begin{algorithm}[t]
\caption{Algorithm of FlowAlign for 3D editing}\label{alg:method_3d}
\begin{algorithmic}[1]

\Require Source parameter $\psi_{src}$, Pre-trained flow-based model $\vv_\theta$, VAE encoder $\mathcal{E}$, Source/Target text embeddings $\vc_{src}, \vc_{tgt}$, CFG scale $\omega$, camera views $Cams$, Noise Schedule $\sigma_t$, differentiable generator $\vg$, min and max timestep $T_{min}, T_{max}$ 

\State $\psi \gets \psi_{src}$
\State \textit{// Freeze VAE encoder $\mathcal{E}$ and flow model $\vv_\theta$}
\State $\vx_{src,c} \gets \mathcal{E}(\vg(\psi_{src},c))$
\For{$t: T_{max} \rightarrow T_{min}$}
    \State $\epsilon \sim \mathcal{N}(0, \sigma^2 \rmI) \quad c\sim \mathcal{U}(Cams)$
    \State $\vx_{t,c} \gets \mathcal{E}(\vg(\psi,c))$
    \State $\vq_{t,c} \gets (1-\sigma_t) \vx_{src,c} + \sigma_t \epsilon$
    \State $\vp_{t,c} \gets \vx_{t,c} - \vx_{src,c} + \vq_{t,c}$
    \State $\vv^\theta(\vp_{t,c}) = \vv^\theta(\vp_{t,c}, \vc_{src}) + \omega\left[\vv^\theta(\vp_{t,c}, \vc_{tgt}) - \vv^\theta(\vp_{t,c}, \vc_{src})\right], \quad\vv^\theta(\vq_{t,c}) = \vv^\theta(\vq_{t,c}, \vc_{src})$
    \State $\mathbb{E}[\vp_0|\vp_t] \gets \vp_{t,c} - t \vv^\theta(\vp_{t,c}), \quad \mathbb{E}[\vq_0|\vq_t] \gets \vq_{t,c} - t \vv^\theta(\vq_{t,c})$
    \State $\nabla_\psi \mathcal{L}_{FA} \gets [(\vv_\theta(\vp_{t,c}) - \vv_\theta(\vq_{t,c}) + 
    \gamma
    (\mathbb{E}[\vp_0|\vp_t]-\mathbb{E}[\vq_0|\vq_t)){\frac{\partial \vx}{\partial \psi}}]$
    \State $\psi \gets \psi - \eta_t \nabla_\psi \mathcal{L}_{FA}$
\EndFor
\State \textbf{return} $\psi$
\end{algorithmic}
\end{algorithm}

\begin{algorithm}[!t]
\caption{Algorithm of FlowAlign with FLUX (Guidance distilled model)}\label{alg:method_flux}
\begin{algorithmic}[1]
\Require Source image $\vx_{src}$, Pre-trained flow model $\vv^\theta$, VAE encoder and Decoder $\mathcal{E}, \mathcal{D}$, 
Source/Target text embeddings $c_{src}, c_{tgt}$, CFG scales $\omega_{src}, \omega_{tgt}$, source consistency scale $\zeta$
\State $\vx_t \gets \mathcal{E}(\vz_{src})$
\For{$t: 1\rightarrow 0$}
    \State $\epsilon \sim \mathcal{N}(0, \rmI)$
    \State $\vq_t \gets (1-t) \vx_{src} + t \epsilonb$
    \State $\vp_t \gets \vx_t - \vx_{src} + \vq_t$
    \State $\vv^\theta(\vp_t) := \vv^\theta(\vp_t, \vc_{tgt}, \omega_{tgt}), \quad \vv^\theta(\vq_t) := \vv^\theta(\vq_t, \vc_{src}, \omega_{src})$
    \State $\mathbb{E}[\vp_0|\vp_t] \gets \vp_t - t \vv^\theta(\vp_t), \quad \mathbb{E}[\vq_0|\vq_t] \gets \vq_t - t \vv^\theta(\vq_t)$
    \State $\vx_t \gets \vx_t + \left[\vv^\theta(\vp_t) - \vv^\theta(\vq_t)\right] dt + \zeta (\mathbb{E}[\vq_0|\vq_t]-\mathbb{E}[\vp_0|\vp_t])$
\EndFor
\State $\vz_{edit} \gets \mathcal{D}(\vx_t)$
\end{algorithmic}
\end{algorithm}

\subsection{Further Variants - FLUX}
\begin{figure}[!t]
    \centering
    \includegraphics[width=\linewidth]{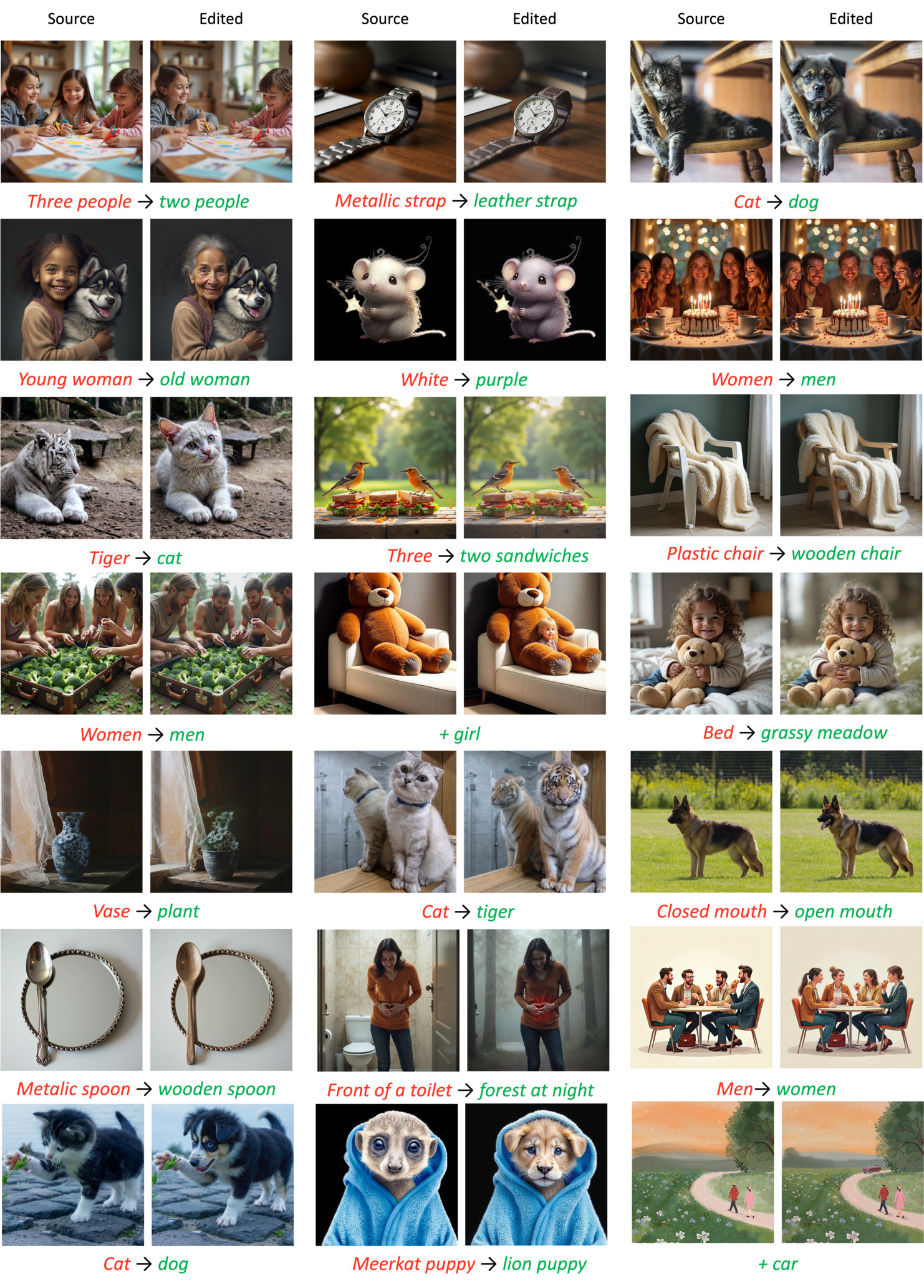}
    \vspace{-0.6cm}
    \caption{Qualitative image editing results using FLUX.}
        \vspace{-0.6cm}
    \label{fig:flux}
\end{figure}
Our main experiments are conducted using by leveraging pre-trained flow model, Stable Diffusion 3.0. Since the proposed method regulates the sampling ODE via a flow matching cost, it is also compatible with other flow-based models, such as FLUX.
Following the setup in FlowEdit~\cite{kulikov2024flowedit}, we additionally evaluate our method using FLUX as the backbone model. We incorporate the source consistency term derived from optimal control with flow matching regularization, as described in Algorithm~\ref{alg:method_flux}.
As FLUX.1-dev is a guidance-distilled model that directly takes the CFG scale as input, there is no need to apply the modified CFG strategy used in Stable Diffusion 3.0 case. Therefore, we simply provide the CFG scale to both $\vv(\vp_t)$ and $\vv(\vq_t)$, following the approach used in FlowEdit.
Figure~\ref{fig:flux} illustrates edited results generated by the proposed method implemented with FLUX.1-dev. Similar to the results with Stable Diffusion 3.0, the outputs effectively reflect the intended editing direction specified by the text prompts while preserving source structures. 
These results imply that the proposed method is broadly applicable to flow-based models for image editing.

\end{document}